\theoremstyle{plain}
\newtheorem{theorem}{Theorem}
\newtheorem{thm}[theorem]{Theorem}
\newtheorem{proposition}[theorem]{Proposition}
\newtheorem{lemma}[theorem]{Lemma}
\newtheorem{cor}[theorem]{Corollary}
\newtheorem{corollary}[theorem]{Corollary}
\newtheorem{assump}[theorem]{Assumption}
\theoremstyle{remark}
\newtheorem{remark}[theorem]{Remark}
\newcommand{\R}{\mathbb{R}}
\newcommand{\E}{\mathbb{E}}
\renewcommand{\epsilon}{\varepsilon}
\newcommand{\essinf}{\mathop{\mathrm{ess\,inf}}\displaylimits}
\def\JELname{{\bfseries JEL Classification}\enspace}
      \def\JEL#1{\par\addvspace\medskipamount{\rightskip=0pt plus1cm
      \def\and{\ifhmode\unskip\nobreak\fi\ $\cdot$
      }\noindent\JELname\ignorespaces#1\par}}
\newcommand{\leqnomode}{\tagsleft@true\let\veqno\@@leqno}
\newcommand{\reqnomode}{\tagsleft@false\let\veqno\@@eqno}
\begin{document}
\title[Schr\"{o}dinger bridge problem]
{
Forward Reverse Kernel  Regression  for the Schr\"{o}dinger bridge problem }
\author[D.~Belomestny]{Denis Belomestny$^{1}$}
\address{$^1$Faculty of Mathematics\\
Duisburg-Essen University\\
Thea-Leymann-Str.~9\\
D-45127 Essen\\
Germany}
\email{denis.belomestny@uni-due.de}

\author[J.~Schoenmakers]{John Schoenmakers$^{2}$}
\address{$^2$Weierstrass Institute for Applied Analysis and Stochastics \\
\mbox{Mohrenstr.~39} \\
10117 Berlin \\
Germany}
\email{schoenma@wias-berlin.de}

\keywords{ Schr\"{o}dinger problem}
\subjclass[2010]{90C40\and 65C05\and 62G08}

\date{}

\begin{abstract}
In this paper, we study the Schr\"odinger Bridge Problem (SBP), which is central to entropic optimal transport.
For general reference processes and begin--endpoint distributions, we propose a forward-reverse iterative Monte Carlo procedure to approximate the Schr\"odinger potentials in a nonparametric way. In particular, we use kernel based Monte Carlo regression in the context of Picard iteration of a corresponding fixed point problem as considered in \cite{CGP2016}. By preserving   in the iteration positivity and contractivity in a Hilbert metric sense, we develop a provably convergent algorithm. Furthermore, we provide convergence rates for the potential estimates and prove their optimality. {Finally, as an application, we propose a non-nested  Monte Carlo procedure for the final dimensional distributions of the
Schr\"odinger Bridge process, based on the constructed potentials and the forward-reverse simulation method for conditional diffusions developed in \cite{BS2014}. 
}

\end{abstract}

\maketitle

\section{Introduction}\label{intro}
The  Schr\"odinger bridge problem (SBP) traces back to a question of Erwin Schr\"odinger in ~\cite{schrodinger1932}:  
\emph{among all evolutions of a system that start in a prescribed distribution and end in another one, which is the most likely when likelihood is measured by relative entropy with respect to a fixed reference process ?}  
Besides its physical origin, the SBP is now known as an entropic analogue of optimal transport and as a stochastic control problem~\cite{leonard2014,daipra1991}.
Let the reference Markov process run in $\R^{d}$ with transition density
\[
  q(s,x;\,t,z), \qquad 0\le s\le t\le T,\; x,z\in\R^{d}.
\]
The aim is to build a Markov process whose joint start-end law
\begin{equation}\label{eq:schro_factor}
  \mu(dx,dz)=q(0,x;\,T,z)\,\nu_{0}(dx)\,\nu_{T}(dz),
\end{equation}
matches two fixed marginals 
\(
  \mu(dx,\R^{d})=\rho_{0}(x)\,dx,\;
  \mu(\R^{d},dz)=\rho_{T}(z)\,dz .
\)
The unknown measures $\nu_{0},\nu_{T}$ are the \emph{boundary
potentials}.  Existence of such potentials was proved by Fortet in one
dimension~\cite{Fortet}, by Beurling in any dimension~\cite{Beur}, and
revisited through a Banach fixed-point argument in~\cite{CGP2016}.  A
recent extension to non-compact supports is given in~\cite{eckstein2025hilbert}.  
Whenever the factorization~\eqref{eq:schro_factor} holds, there exists a
\emph{Schr\"odinger Markov process} $X^{\mu}$ such that for any grid
$0<t_{1}<\dots<t_{n}<T$ and every bounded Borel function $g$ on $\mathbb{R}^{d(n+2)}$,
\begin{equation}\label{eq:cond_diffusion}
  \E\!\bigl[
       g\bigl(X_{0}^{\mu},X_{t_{1}}^{\mu},\dots,X_{t_{n}}^{\mu},X_{T}^{\mu}\bigr)
     \bigr]
  \;=\!
  \int_{\R^{d}\times\R^{d}}
    \mu(dx,dz)\,
    \E\!\bigl[
      g\bigl(x,X_{t_{1}}^{x},\dots,X_{t_{n}}^{x},z\bigr)\,
      \big|\,X_{T}^{x}=z
    \bigr]
\end{equation}
where $X^{x}$ denotes the reference process started in $x$ at time $0$.
For an \emph{arbitrary} coupling~$\mu$, the right-hand side of
\eqref{eq:cond_diffusion} still determines a \emph{reciprocal
process}\footnote{See Appendix~\ref{sec:rp} for a short review.}, which
possesses only a weak (two-time) Markov property.  Jamison~\cite{Jam74} proved that
this process is genuinely Markov if and only if $\mu$ factorizes as in
\eqref{eq:schro_factor}.  In other words, the factorization
criterion characterises precisely when a reciprocal family can be
promoted to a Markov one - the hallmark of a Schr\"odinger bridge.
\par
 In many presentations of the Schr\"odinger Bridge problem, one takes a very simple reference process, for instance some Brownian motion with drift, so that its transition kernel is explicitly known, see e.g. \cite{pooladian2024plug} and \cite{baptista2024conditional}. However, there are several practical and theoretical advantages in considering more general reference processes, for example processes given by multidimensional Stochastic Differential Equations (SDEs) possibly restricted  to certain domain constraints. 
 Loosely speaking,
 in the SBP the new process is found by reweighting the paths of the reference process to satisfy the desired endpoint distributions. If the reference is already ``close'' (in distribution sense) to the target boundary marginals, the amount of correction required is ``small'', and  iterative numerical procedures may converge  rapidly.  
\par
In principle, \(\nu_{0}\) and \(\nu_{T}\) can be computed by a forward--backward iteration scheme analog to the Sinkhorn or IPFP scheme in discrete entropic optimal transport.  In particular, if \(q\) is the transition kernel of the reference process and \(\rho_{0}\) and \(\rho_{T}\) are prescribed boundary densities, then according to (\ref{eq:schro_factor}), \(\nu_{0}\) and \(\nu_{T}\) satisfy
\begin{align}
\rho_{0}(x) \;&=\; \nu_{0}(x)\,\int q(0,x;T,z)\,\nu_{T}(z)\,dz,\notag \\
\rho_{T}(z) \;&=\; \nu_{T}(z)\,\int q(0,x;T,z)\,\nu_{0}(x)\,dx,\label{eq:se-intr}
\end{align}
Conceptually, one can attempt a Picard (fixed-point) iteration:
\begin{equation}\label{picit}
\nu_{T}^{(n)} \;\longrightarrow\; \nu_{0}^{(n+1)}, 
\quad
\nu_{0}^{(n+1)}\;\longrightarrow\; \nu_{T}^{(n+1)},
\end{equation}
thus updating each potential function estimate based on an estimate of the other one, until convergence within some prescribed accuracy level is achieved.  When the reference process and its transition densities are well understood (e.g., known analytically and  low- dimensional), it is possible to discretize and solve these integral equations directly.  However, this becomes computationally (too) challenging  for more complex reference processes, in particular in higher dimensions.
\par

An attractive alternative is to solve the Schr\"{o}dinger system
(\ref{eq:se-intr}) \emph{stochastically}, using Monte Carlo approximations of
the involved integrals. More precisely, observe that system \eqref{eq:se-intr}
can be written in stochastic terms:
\begin{align}
\rho_{0}(x) &  \;=\;\nu_{0}(x)\,\mathbb{E}\bigl[\nu
_{T}(X_{T}^{x})\bigr],\label{eq:schr-exp-1}\\
\rho_{T}(z) &  \;=\;\nu_{T}(z)\,\mathbb{E}\bigl[\,\nu_{0}(Y_{T}^{z}%
)\,\mathcal{Y}_{T}^{z}\bigr],\label{eq:schr-exp-2}
\end{align}
where $X^{x}$ is the \textquotedblleft forward\textquotedblright\ reference
process starting in $x$ at time $0$ and $(Y^{z},\mathcal{Y}^{z})$ is a
suitably chosen \textquotedblleft reverse\textquotedblright\ process running
through $\mathbb{R}^{d}\times\mathbb{R}_{+}$ with $(Y_{0}^{z},\mathcal{Y}%
_{0}^{z})=(z,1)$. Let us underline that we do not need to assume an explicit
closed-form for the reference transition kernel $q(s,x;t,y)$. Instead, we only
require the ability to sample from the forward process $X$ with the same
dynamics as the reference, and to sample from the \textquotedblleft
reverse\textquotedblright\ process $(Y,\mathcal{Y})$. The construction of the
reverse process goes back to \cite{Thomson1987} for special cases that allow
for $\mathcal{Y}\equiv1$. A generalization to general diffusions was
constructed in \cite{MSS2004}. The details are spelled out in
Appendix~\ref{subsec:reverse_diffusion_construction}.

Having at hand some estimate $\nu_{T}^{(n)}$ say, one may carry out the
updates in (\ref{picit}) via the following Monte Carlo kernel regression
procedure (rough sketch): We construct

\begin{itemize}
\item forward paths $\{(X_{0}^{(i)},X_{T}^{(i)})\}_{i=1}^{N}$ under some
initial distribution of $X_{0}$,

\item reverse paths $\{(Y_{0}^{(j)},Y_{T}^{(j)},\mathcal{Y}_{T}^{(j)}%
)\}_{j=1}^{M}$ under some initial distribution of $Y_{0}$.
\end{itemize}

Next, using kernel (Nadaraya--Watson) regression, we approximate the
conditional expectation in (\ref{eq:schr-exp-1}) by
\[
\widehat{g}(x)\;=\;\frac{\sum_{i=1}^{N}K\left(  (x-X_{0}^{(i)})/\delta\right)
\,\nu_{T}^{(n)}(X_{T}^{(i)})}{\sum_{i=1}^{N}K\left(  (x-X_{0}^{(i)}%
)/\delta\right)  },
\]
where $K$ is a suitable kernel, and set $\nu_{0}^{(n+1)}(x)\;=\;\rho
_{0}(x)/\widehat{g}(x).$ Similarly, we estimate the expectation in
(\ref{eq:schr-exp-2}) by%
\[
\widehat{h}(x)\;=\;\frac{\sum_{j=1}^{M}K\left(  (x-Y_{0}^{(j)})/\delta\right)
\,\nu_{0}^{(n+1)}(Y_{T}^{(j)})\mathcal{Y}_{T}^{(j)}}{\sum_{j=1}^{M}K\left(
(x-Y_{0}^{(j)})/\delta\right)  \mathcal{Y}_{T}^{(j)}},
\]
and set $\nu_{T}^{(n+1)}(x)\;=\;\rho_{T}(x)/\widehat{h}(x)$.
\par
The above updating procedure may be repeated until  no  improvement  
within a certain accuracy level is obtained any more.   
We thus obtain a continuous approximation to the boundary potentials \(\nu_{0}\) and \(\nu_{T}\) in a flexible, data-driven manner. In particular, this method provides an explicit functional representation of the Schr\"odinger boundary potentials in settings where classical deterministic methods are prohibitive, bridging the gap between rigorous entropic optimal transport theory and practical high-dimensional Monte Carlo implementations. Furthermore, we analyze the convergence of the proposed iteration scheme.  A cornerstone of our analysis is the fact that the forward--reverse Monte Carlo iteration 
remains a \emph{contraction} in the Hilbert projective metric, which is recapitulated in Appendix~\ref{recapHM}. Next as an application we analyze the problem of generating Schr\"odinger Bridge using $h$-transform techniques for SDEs  based on the estimated potentials. 
Finally, as another application, we show that the finite dimensional distributions 
of a Schr\"odinger Bridge process can be estimated by a non-nested Monte Carlo procedure if the potentials $\nu_0$ and $\nu_T$ are given, or constructed by the (likewise non-nested) Monte Carlo procedure presented in this paper. In fact, this is achieved via an application of the forward-reverse 
simulation procedure for conditional diffusions developed in \cite{BS2014}.

Theoretical and numerical analysis of the SBP and its iterative solution has been extensively studied. L\'eonard~\cite{leonard2013survey} provides a foundational overview of the Schr\"odinger problem, its entropy minimization formulation, and convergence properties. Chen, Georgiou, and Pavon~ \cite{chen2016entropic} interpret the SBP in terms of stochastic control and analyze the convergence of the iterative scaling algorithm. Peyr\'e and Cuturi~ \cite{peyre2019computational} frame the problem within entropic optimal transport and demonstrate numerical schemes based on Sinkhorn iteration. Benamou et al.~\cite{benamou2015iterative} introduce iterative Bregman projections, a generalization of Picard iteration for entropy-regularized problems. Cominetti, Soto, and R\'ios~\cite{cominetti2021rate} analyze the convergence rate of Sinkhorn-like iterations. De Bortoli et al. ~\cite{debortoli2021neurips} explore neural approaches that learn Schr\"odinger potentials using iterative schemes as part of model training. Pavon, Tabak, and Trigila~\cite{pavon2018data} propose an iterative method for solving the Schr\"odinger bridge problem when the marginals are only known via samples. Their approach generalizes Fortet--Sinkhorn iterations by combining importance sampling and constrained maximum likelihood estimation to propagate the Schr\"odinger potentials. This sample-based method is particularly well-suited for high-dimensional applications, where grid-based methods become infeasible. 
\par
In contrast to classical approaches relying on analytic forms of the transition density of the reference process ~\cite{leonard2013survey, chen2016entropic, benamou2015iterative}, our paper offers a nonparametric and data-driven framework in the case of general reference processes for which only a generative model is available. We develop a kernel-based estimation methodology  that allows for efficient estimation of Schr\"odinger potentials using forward and reverse samples from the reference process. Furthermore, we establish strong theoretical guarantees for the convergence and performance of the proposed method. Notably, we derive for the first time in the literature minimax-optimal rates of functional approximation for the Schr\"odinger potentials in Hilbert's metric based  on samples from the reference process.
\par
The paper is organized as follows. First, we review the Schr\"odinger Bridge problem and introduce some notations in Section~\ref{alg:iterative_kernel}. Section~\ref{sec:aip} is devoted to the description of our iterative  kernel  regression algorithm. In Section~\ref{sec:conv}, we present our convergence analysis.
A short perturbation analysis of the actual trajectories of the SB due to approximated potentials is done in Section~\ref{SBsim}.
In Section~\ref{FRSP} we outline a non-nested simulation procedure for the finite dimensional distributions of a  Schr\"odinger Bridge process. 
The proofs of our convergence results are deferred to Section~\ref{proofs}. Appendices~A--F  
recapitulate the for our goals relevant concepts and results from the literature.

\section{Schr\"{o}dinger problem as a fixed point problem}\label{alg:iterative_kernel}

In this section, we present the essentials of the Schr\"odinger system 
(\ref{eq:schro_factor})
following  \cite{CGP2016}.

\begin{theorem}{ \cite[Prop.~1]{CGP2016}}
\label{pavon}  Let $q(0,\cdot;T,\cdot)$ be continuous and strictly positive on
$\mathbb{R}^d\times\mathbb{R}^d$. Then for
given densities $\rho_{0}$ and $\rho_{T}$ with compact supports $\mathsf{S}_{0}%
\subset\mathbb{R}^{d}$\ and $\mathsf{S}_{T}\subset\mathbb{R}^{d},$ respectively, there
exist Borel measurable functions $\nu_{0}:\mathsf{S}_{0}\rightarrow\mathbb{R}_{\geq0}$ and
$\nu_{T}:\mathsf{S}_{T}\rightarrow\mathbb{R}_{\geq0}$ such that
\begin{align}
\rho_{0}(x) &  =\nu_{0}(x)\int_{\mathsf{S}_{T}}q(0,x;T,z)\nu_{T}(z)\,dz,\text{
a.e. on }\mathsf{S}_{0},\nonumber\\
\rho_{T}(z) &  =\nu_{T}(z)\int_{\mathsf{S}_{0}}q(0,x;T,z)\nu_{0}(x)\,dx\text{
a.e. on }\mathsf{S}_{T}.\label{SPT}%
\end{align}
Moreover, if $\nu_{0}^{\prime}$ and $\nu_{T}^{\prime}$ is another pair of solutions, one
has that $\nu_{0}^{\prime}=c\nu_{0}$ and $\nu_{T}^{\prime}=c^{-1}\nu_{T}$ for
some $c>0.$
\end{theorem}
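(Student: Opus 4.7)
The plan is to reformulate the coupled Schr\"odinger system \eqref{SPT} as a single fixed-point equation for one potential and to invoke a Banach-type contraction argument in Hilbert's projective metric on the cone of positive functions. Given a strictly positive candidate $\nu_T$ on $\mathsf{S}_T$, I would define
\[
\mathcal{A}\nu_T(x) \;=\; \frac{\rho_0(x)}{\int_{\mathsf{S}_T} q(0,x;T,z)\,\nu_T(z)\,dz},\qquad x \in \mathsf{S}_0,
\]
and then
\[
\mathcal{F}\nu_T(z) \;=\; \frac{\rho_T(z)}{\int_{\mathsf{S}_0} q(0,x;T,z)\,\mathcal{A}\nu_T(x)\,dx},\qquad z \in \mathsf{S}_T.
\]
A fixed point $\nu_T=\mathcal{F}\nu_T$, together with $\nu_0 := \mathcal{A}\nu_T$, is precisely a solution of \eqref{SPT}. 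Since the system is invariant under the rescaling $(\nu_0,\nu_T)\mapsto (c\nu_0, c^{-1}\nu_T)$, the natural ambient space is the positive cone of measurable functions on $\mathsf{S}_T$ modulo positive scalars, equipped with Hilbert's projective metric $d_H$ recalled in Appendix~\ref{recapHM}.

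The first task is to check that $\mathcal{F}$ is well-defined and preserves positivity. By hypothesis, $q$ is continuous and strictly positive on the compact product $\mathsf{S}_0\times\mathsf{S}_T$, so there exist constants $0<q_{\min}\le q(0,x;T,z)\le q_{\max}<\infty$, which guarantees that the denominators are finite and uniformly bounded away from zero whenever the argument integrates against a positive $\nu_T$. The crucial step is then a contraction estimate in $d_H$. By Birkhoff's theorem, a positive integral operator with kernel sandwiched between two strictly positive constants contracts Hilbert's metric by the factor
\[
\kappa(q) \;=\; \tanh\!\bigl(\tfrac14 \log M(q)\bigr),\qquad
M(q) \;=\; \sup_{x,x',z,z'}\frac{q(0,x;T,z)\,q(0,x';T,z')}{q(0,x;T,z')\,q(0,x';T,z)},
\]
which under our hypotheses is strictly less than $1$. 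Since pointwise multiplication by the fixed positive densities $\rho_0,\rho_T$ and pointwise reciprocation are $d_H$-isometries, each of the two half-steps composing $\mathcal{F}$ is a $\kappa(q)$-contraction, and thus $\mathcal{F}$ contracts by $\kappa(q)^2<1$.

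Existence then follows from Banach's fixed-point theorem applied to $\mathcal{F}$ on a closed oscillation-bounded subset of the projective cone, which is complete for $d_H$. For the uniqueness claim, if $(\nu_0,\nu_T)$ and $(\nu_0',\nu_T')$ are two solution pairs, then $\nu_T$ and $\nu_T'$ are both fixed points of $\mathcal{F}$; strict contraction forces them to coincide in the projective quotient, which is exactly the assertion $\nu_T' = c^{-1}\nu_T$ for some $c>0$, and then the first line of \eqref{SPT} gives $\nu_0' = c\nu_0$. The main obstacle I anticipate is the bookkeeping at the boundary of the cone: the Birkhoff contraction is only sharp for strictly positive representatives, so I would first obtain a strictly positive continuous fixed point on the compact supports $\mathsf{S}_0$ and $\mathsf{S}_T$, and only afterwards extend the potentials measurably by zero off the supports, verifying that the extended functions still satisfy \eqref{SPT} almost everywhere.
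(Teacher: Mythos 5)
Your proposal is correct and follows essentially the same route as the paper, which itself defers to \cite[Prop.~1]{CGP2016}: recast the system as a fixed-point equation for a composition of strictly positive integral operators and pointwise reciprocals, obtain a Birkhoff contraction in Hilbert's projective metric, and conclude with Banach's fixed-point theorem together with projective uniqueness. The only cosmetic difference is that you iterate on $\nu_T$ itself whereas the paper iterates on $g^{\star}=\rho_T/\nu_T$ via the operator $\mathcal{C}$ in (\ref{Cop}); the two are conjugate under the $d_H$-isometries of reciprocation and multiplication by $\rho_T$, and the paper's choice has the minor advantage that $g^{\star}=\int q\,\nu_0\,dx$ is automatically bounded above and below by positive constants, so it lies in $\mathcal{L}_{+}^{\infty}(\mathsf{S}_{T})$ without the boundary-of-the-cone bookkeeping you flag at the end.
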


\begin{corollary}
If $\nu_{0}$ and $\nu_{T}$ are as in Theorem~\ref{pavon}, then there is a reciprocal  Markov process \(X^{\mu}\) with 
finite dimensional distributions \eqref{eq:cond_diffusion},  where    \(\mu\left(  dx,dz\right)  =q(0,x;T,z)\nu_{0}(x)\nu_{T}(z)\,dx\,dz.\)
\end{corollary}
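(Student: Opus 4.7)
The plan is to first identify the coupling $\mu$ produced by Theorem~\ref{pavon}, next build a candidate process whose finite-dimensional distributions are prescribed by the right-hand side of \eqref{eq:cond_diffusion}, and finally invoke Jamison's factorization criterion (mentioned in the introduction) to upgrade the reciprocal structure to the Markov property.

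First I would set $\mu(dx,dz) := q(0,x;T,z)\,\nu_0(x)\nu_T(z)\,dx\,dz$ and use the Schr\"odinger system \eqref{SPT} directly to check that the marginals of $\mu$ equal $\rho_0(x)\,dx$ and $\rho_T(z)\,dz$, so that $\mu$ is an honest coupling of the two prescribed probability measures. In particular, integrating the first line of \eqref{SPT} over $\mathsf{S}_0$ and the second over $\mathsf{S}_T$ shows that the total mass is $1$, which is the only quantitative check involved.

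Next I would define the finite dimensional distributions of the candidate process $X^\mu$ at any ordered grid $0<t_1<\dots<t_n<T$ by the right-hand side of \eqref{eq:cond_diffusion}, i.e.\ by mixing the reference bridges $\mathrm{Law}(x,X_{t_1}^x,\dots,X_{t_n}^x,z\mid X_T^x=z)$ over $\mu(dx,dz)$. Consistency (Kolmogorov compatibility) is a routine verification: marginalising one intermediate coordinate $t_k$ reduces the inner bridge law to a bridge law on the coarser grid, because the reference process is Markov and the conditional law of a Markov process given its endpoints is automatically consistent under projections. This yields a process $X^\mu$ whose end-point law is $\mu$ and whose conditional distribution given the endpoints $(X_0^\mu,X_T^\mu)=(x,z)$ is that of a reference bridge; by definition, this is precisely a reciprocal process associated to $\mu$.

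The final step, which is the main substantive point, is to promote reciprocity to the full Markov property. For a general coupling, as stressed in the introduction, only a weak (two-sided) Markov property holds. However, Jamison's theorem states that such a reciprocal process is genuinely Markov if and only if the coupling factorizes as $\mu(dx,dz)=q(0,x;T,z)\,\nu_0(dx)\,\nu_T(dz)$ for some $\sigma$-finite measures $\nu_0,\nu_T$. Our coupling is of exactly this form by construction, with $\nu_0,\nu_T$ the potentials supplied by Theorem~\ref{pavon}, so Jamison's criterion applies and yields the Markov property of $X^\mu$. The main obstacle, if any, is the application of Jamison's result, but since it is cited as a black box from the introduction the corollary reduces to matching its hypothesis with the factorization already guaranteed by Theorem~\ref{pavon}.
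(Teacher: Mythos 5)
Your proposal is correct and follows exactly the route the paper intends: the factorized coupling $\mu$ has the right marginals by \eqref{SPT}, the right-hand side of \eqref{eq:cond_diffusion} defines a consistent family of finite-dimensional distributions giving a reciprocal process (Appendix~\ref{sec:rp}), and Jamison's criterion \cite{Jam74} upgrades it to a Markov process precisely because $\mu$ factorizes. The paper leaves this implicit, citing the same ingredients, so there is nothing to add.
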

Theorem~\ref{pavon} can be proved by establishing the contraction of an
operator $\mathcal{C}$ defined as
\begin{equation}
\mathcal{C}[g]=\int_{\mathsf{S}_{0}}\frac{\rho_{0}(x)}{\int_{\mathsf{S}_{T}%
}q(0,x;T,z)\frac{\rho_{T}(z)}{g(z)}\,dz}q(0,x;T,\cdot)\,dx\label{Cop}%
\end{equation}
in the Hilbert metric (Appendix~\ref{recapHM}). The operator $\mathcal{C}[g]$ is essentially a
composition of positive linear integral transforms (with strictly positive
kernels) and pointwise reciprocals of functions. These operations preserve
positivity, so $\mathcal{C}[g]$ remains strictly positive whenever $g$ is.
Moreover, $\mathcal{C}$ is positively homogeneous, meaning that scaling $g$ by
a positive constant does not affect the \textquotedblleft
core\textquotedblright\ of the map. This is precisely why the Hilbert distance
$d_{H}(f,g)$ between two strictly positive functions $f$ and $g$ is the
natural choice here: it is invariant under scalings of $f$ and $g$ with
arbitrary positive scaling factors (for further details see Appendix~\ref{recapHM}). Then Birkhoff's theorem essentially
implies that such compositions of strictly positive integral operators and
reciprocal maps become strict contractions in the Hilbert metric on the cone%
\[
\mathcal{L}_{+}^{\infty}(\mathsf{S}_{T}):={\displaystyle\bigcup
\limits_{\epsilon>0}}\mathcal{L}_{\epsilon}^{\infty}(\mathsf{S}_{T})\text{
with }\mathcal{L}_{\epsilon}^{\infty}(\mathsf{S}_{T}):=\left\{  f\in
\mathcal{L}^{\infty}(\mathsf{S}_{T}):f(x)\geq\epsilon\text{ \ for a.e. }%
x\in\mathsf{S}_{T}\right\}
\]
under suitable irreducibility assumptions. For details and a historical
overview see \cite{LemNus13} for example. In \cite{CGP2016} it is shown that
for $\mathcal{C}$ given by (\ref{Cop}), under the conditions of
Theorem~\ref{pavon}, there is a constant $\kappa=\kappa(\mathcal{C})<1$ such
that
\[
d_{H}(\mathcal{C}[f],\mathcal{C}[g])\,\leq\,\kappa\,d_{H}(f,g),
\]
for all strictly positive $f,g$ $\in$ $\mathcal{L}_{+}^{\infty}(\mathsf{S}_{T})$. So by the usual fixed-point argument (adapted
to the metric $d_{H}$ that ignores scalar multiples), $\mathcal{C}$ has a
unique fixed point (up to scaling) that satisfies $\mathcal{C}\left(
g^{\star}\right)  =g^{\star}$ in $d_{H}$ sense, that is $\mathcal{C}\left(
g^{\star}\right)  =\alpha g^{\star}$ almost everywhere for some $\alpha>0.$ In
\cite{CGP2016} it is moreover shown that  $\alpha=1$, and that $g^{\star}$ is continuous, i.e. has a  version in  $\mathcal{L}_{+}^{\infty}(\mathsf{S}_{T})$
that is continuous on the whole $\mathsf{S}_{T}$. 
Then given this $g^{\star
}$ satisfying $\mathcal{C}\left(  g^{\star}\right)  =g^{\star}$ almost
everywhere, the solution in Theorem~\ref{pavon} is obviously determined by%
\begin{equation}
\nu_{T}=\frac{\rho_{T}}{g^{\star}}\text{ \ \ and \ }\nu_{0}=\frac{\rho_{0}%
}{\int_{\mathsf{S}_{T}}q(0,\cdot;T,z)\nu_{T}(z)dz}.\label{fp1}%
\end{equation}

Let us separately consider three degenerate cases.
\begin{enumerate}
 \item Suppose that $\mathsf{S}_{0}=\left\{  x_{0}\right\}  ,$ for some $x_{0}\in
\mathbb{R}^{d},$ i.e. $\rho_{0}=\delta(\cdot-x_{0}).$ Then $g^{\ast
}=\mathcal{C}\left(  g^{\star}\right)  $ formally implies%
\[
g^{\star}=\frac{q(0,x_{0};T,\cdot)}{\int_{\mathsf{S}_{T}}q(0,x_{0};T,z)\frac{\rho
_{T}(z)}{g^{\star}(z)}dz}=cq(0,x_{0},1,\cdot)
\]
for some $c>0,$ and then (\ref{fp1}) yields
\[
\nu_{T}=c^{-1}q(0,x_{0};1,\cdot)^{-1}\rho_{T},\text{ \ \ and \ \ }\nu
_{0}=c\delta(\cdot-x_{0}).
\]

\item Similarly, suppose that $\mathsf{S}_{T}=\left\{  z_{0}\right\}  ,$ for some
$z_{0}\in\mathbb{R}^{d},$ i.e. $\rho_{T}=\delta(\cdot-z_{0}).$ Then $g^{\ast
}=\mathcal{C}\left(  g^{\star}\right)  $ formally implies%
\[
g^{\star}=g^{\star}(z_{0})\int_{S_{0}}\frac{q(0,x;T,\cdot)}{q(0,x;T,z_{0})}%
\rho_{0}(x)dx,
\]
and then (\ref{fp1}) yields%
\[
\nu_{T}=c\delta(\cdot-z_{0})\text{\ \ and \ \ }\nu_{0}=c^{-1}q(0,\cdot
;T,z_{0})^{-1}\rho_{0}%
\]
with $c=g^{\star}(z_{0})^{-1}$ can be taken arbitrarily.

\item If both start and end point distribution are degenerated, we thus have
the classical bridge and get $\nu_{T}=c\delta(\cdot-z_{0})$ and $\nu
_{0}=c^{-1}q(0,x_{0},1,z_{0})^{-1}\delta(\cdot-x_{0}),$ where $c>0$ can be
taken arbitrarily. 
\end{enumerate}

So, in the above degenerate cases the Schr\"{o}dinger problem has a relatively
trivial solution. We henceforth assume that both start and end point
distribution are non-degenerated.

\section{Iterative approximation procedure}
\label{sec:aip}
In this section we will spell out in detail an iterative Monte Carlo regression procedure 
as heuristically sketched in Section~\ref{intro}. The procedure yields
an  
approximation to the fixed point $g^\star$ of the operator (\ref{Cop}), and hence  via (\ref{fp1}) an approximation to the potential functions $\nu_0$ and $\nu_T$ due to Theorem~\ref{pavon}.
The operator (\ref{Cop}) may be decomposed as%
\begin{equation}
\mathcal{C}=\mathcal{E}_{0}\circ\mathcal{D}_{0}\circ\mathcal{E}_{T}%
\circ\mathcal{D}_{T}\label{cdec}%
\end{equation}%
 with
\begin{align*}
\mathcal{D}_{0}  &  :\mathcal{L}_{+}^{\infty}(\mathsf{S}_{0})\ni f\rightarrow
1/f\in\mathcal{L}_{+}^{\infty}(\mathsf{S}_{0}),\\
\mathcal{D}_{T}  &  :\mathcal{L}_{+}^{\infty}(\mathsf{S}_{T})\ni f\rightarrow
1/f\in\mathcal{L}_{+}^{\infty}(\mathsf{S}_{T}),\\
\mathcal{E}_{T}  &  :\mathcal{L}_{+}^{\infty}(\mathsf{S}_{T})\ni f\rightarrow
\int_{\mathsf{S}_{T}}q(0,\cdot;T,z)\rho_{T}(z)f(z)\, dz\in\mathcal{L}_{+}^{\infty}%
(\mathsf{S}_{0}),\\
\mathcal{E}_{0}  &  :\mathcal{L}_{+}^{\infty}(\mathsf{S}_{0})\ni f\rightarrow
\int_{\mathsf{S}_{0}}\rho_{0}(x)f(x)q(0,x;T,\cdot)\, dx\in\mathcal{L}_{+}^{\infty}(\mathsf{S}_{T}).
\end{align*}
The operators $\mathcal{E}_{0}$ and $\mathcal{E}_{T}$ have, respectively, the
following stochastic representations 
\begin{align}
\mathcal{E}_{T}[f](x)  &  =\mathbb{E}\left[  \rho_{T}(X_{T}^{x})f(X_{T}%
^{x})\right]  ,\text{ \ \ }f\in\mathcal{L}_{+}^{\infty}(\mathsf{S}_{T}),
\notag \\
\mathcal{E}_{0}[f](z)  &  =\mathbb{E}\left[  \rho_{0}(Y_{T}^{z})f(Y_{T}%
^{z})\mathcal{Y}_{T}^{z}\right]  ,\text{ \ \ }f\in\mathcal{L}_{+}^{\infty
}(\mathsf{S}_{0})\notag \label{e0reverse}
\end{align}
where \((Y,\mathcal{Y})\) is termed a {reverse process}, see Appendix~\ref{subsec:reverse_diffusion_construction} for more details and references on reverse processes in diffusion setting. 

\begin{remark}[Reverse diffusion vs.\ time-reversed diffusion]
It should be noted that the term ``reverse'' diffusion for $Y$ is somewhat misleading as it  
differs from the \emph{time-reversed} diffusion 
in the sense of Haussmann and Pardoux \cite{HP86}. For specifying the dynamics of the latter one explicitly needs the transition
density of $X$.  In contrast, the SDE dynamics of $Y$  is  straightforwardly inferred from the SDE dynamics of $X$ and  has usually similar regularity properties.  A key advantage of our ``reverse'' diffusion is that it can be constructed far more simply than the \emph{time-reversed} diffusion 
 in \cite{HP86}. As a consequence,
integrals of the form $\int g(x)\,q(0,x,T,\cdot)\,dx$ can be 
computed by simulation of ``reverse'' stochastic representations  
involving $Y$, more simply than by representations relying on the real 
``time-reversed'' diffusion.
Although the term ``reverse''  might thus be considered kind of a misnomer, it is  nonetheless  maintained  in this paper because it stems from our main
background references \cite{BS2014} and \cite{MSS2004}.    
\end{remark}
In the sequel, we make the following assumptions. 
\begin{assump} 
\label{ass:qrho}
Let the transition density $q$, the densities $\rho_0,$ $\rho_T$ and their respective  supports $\mathsf{S}_0,$ $\mathsf{S}_T$ be as in Theorem~\ref{pavon}. For technical reasons  we moreover assume that the compact sets
$\mathsf{S}_0$ and $\mathsf{S}_T$ are {\em connected}.
Let then for all \((x,z)\in \mathsf{S}_{0}\times \mathsf{S}_{T},\)
\[
0<q_{\min}\leq q(0,x;T,z) \leq q_{\max}<\infty,\quad
0<Q_{\min}\leq Q_0(x), Q_T(z)\leq Q_{\max}<\infty 
\]
with $Q_T(x):=\int_{\mathsf{S}_{T}} q(0,x;T,z)\,dz$ and $Q_0(z):=\int_{\mathsf{S}_{0}} q(0,x;T,z)\,dx.$
Moreover, we assume that
\[
0<\rho_{\min}\leq \rho_0(x),\rho_T(z)\leq \rho_{\max}<\infty.
\]
\end{assump}
It follows from Theorem~\ref{pavon} that in this case the solution of the SBP is unique (up to a scaling factor) and that the fix point  \(g^\star\) of (\ref{Cop}) (in the usual sense) is unique up to a multiplicative constant.
In order to enforce  complete uniqueness, we normalize  \(g^\star\). 
\begin{assump} 
\label{ass:nuT}
Assume that \(g^\star(z)=\rho_T(z)/\nu_T(z)\) integrates to \(1,\) that is,
\[
\int_{\mathsf{S}_{T}}g^\star(z)\,dz=1.
\]
\end{assump}
Under Assumption~\ref{ass:nuT}, we have \(1=\int_{\mathsf{S}_{T}} g^\star(z)\,dz=\int_{\mathsf{S}_{0}}Q_T(x)\nu_{0}(x)\,dx\) due to (\ref{SPT}), and consequently
\begin{eqnarray}
\label{eq:prior-g}
g^\star_{\min}\leq g^\star(z)\leq g^\star_{\max},\quad z\in \mathsf{S}_{T},
\end{eqnarray}
where \(g^\star_{\min}=q_{\min}/Q_{\max},\) \(g^\star_{\max}=q_{\max}/Q_{\min}.\) 
Under Assumption~\ref{ass:qrho},  we  also have the estimates%
\begin{align}
\label{eq:E-bounds}
0  &  <q_{\min}f_{\min}\leq\mathcal{E}_{T}(f),\mathcal{E}_{0}(f)\leq q_{\max} f_{\max}.
\end{align}

Let $K$ be a continuous nonnegative  kernel on $\mathbb{R}%
^{d}$ and let $\phi_0$ be a density on $\overline U_0$, for a bounded open set $U_0$ $\supset$ $\mathsf{S}_{0}$, which is bounded away from zero on $\mathsf{S}_{0}$.
 For obtaining an approximation to $\mathcal{E}_{T}(f)$  for any $f\in\mathcal{L}_{+}^{\infty}(\mathsf{S}_{T}),$
we use  a kernel-type regression estimate. First, we generate a sample  \(x^1,\ldots,x^N\sim \phi_0,\)  fix \(\delta=\delta_N\) and define 
\begin{eqnarray}
\mathcal{E}_{T}^{N}[f]:=
\begin{cases}
Q_T S_N[\rho_T f]/S_N[1_{\mathsf{S}_{T}}], & S_N[1_{\mathsf{S}_{T}}]>0,
\\
Q_{\min}\rho
_{\min}f_{\min}, & S_N[1_{\mathsf{S}_{T}}]=0
\end{cases}
\end{eqnarray}
where \(f_{\min} = \inf_{\mathsf{S}_{T}}f\) and
\[
S_N[g](x):=\frac{1}{N}\sum_{i=1}^{N}K((x-x^{i})/\delta
)g(X_{T}^{0,x^{i}}).
\]
 Note that \(S_N[1_{\mathsf{S}_{T}}]=0\) implies \(S_N[\rho_T f]=0\) and hence our definition of the estimate is natural.
Similarly, for any $f\in\mathcal{L}_{+}^{\infty}(\mathsf{S}_{0}),$
we sample $(Y_{T}^{z^{i}},\mathcal{Y}_{T}^{z^{i}})$, $i=1,\ldots,N,$ with
$z^{1},\ldots,z^N\sim\phi_T$, where \(\phi_T\) is a  density on $\overline U_T$  with  $U_T$ $\supset$ \(\mathsf{S}_{T}\)  being a bounded open set, and which is bounded away from zero on \(\mathsf{S}_{T}\). We then set%
\begin{eqnarray}
\mathcal{E}_{0}^{N}[f]:=
\begin{cases}
Q_0 \widetilde{S}_N[\rho_0 f]/\widetilde{S}_N[1_{\mathsf{S}_{0}}], & \widetilde{S}_N[1_{\mathsf{S}_{0}}]>0,
\\
Q_{\min}\rho
_{\min}f_{\min}, & \widetilde{S}_N[1_{\mathsf{S}_{0}}]=0.
\end{cases}
\end{eqnarray}
where  \(f_{\min} = \essinf_{\mathsf{S}_{0}}f,\)
\[
\widetilde{S}_N[g](z)=\frac{1}{N}\sum_{i=1}^{N}K((z-z^{i})/\delta
)g(Y_{T}^{z_{i}})\mathcal{Y}_{T}^{z_{i}}.
\]
Note that by construction, we have the lower bounds%
\begin{align}
Q_{\min}\rho_{\min}f_{\min } \leq \mathcal{E}_{T}^{N}[f]     \leq  \rho_{\max} f_{\max}, \quad
Q_{\min}
\rho_{\min}f_{\min } \leq \mathcal{E}_{0}^{N}[f]     \leq  \rho_{\max} f_{\max}{Q_{\max}}.  
\label{eq:EN-bounds}
\end{align}
The above kernel approximations 
result in an approximation of the operator
$\mathcal{C}$ in (\ref{cdec}) by,
\begin{equation}
\mathcal{C}^{N}:=\mathcal{E}_{0}^{N}\circ\mathcal{D}_{0}%
\circ\mathcal{E}_{T}^{N}\circ\mathcal{D}_{T}.\label{Cappr}%
\end{equation}
Note that, as well as $\mathcal{C}$, its approximation $\mathcal{C}^{N}$
is also positive homogeneous.
Finally,  consider for an arbitrarily fixed $g_{0}\in\mathcal{L}_{+}^{\infty}%
(\mathsf{S}_{T})$ 
the sequence of approximations

\begin{equation}
\label{eq:picar-iter}
\widehat{g}%
_{\ell}:=\mathcal{T}_{[g^{\star}_{\min},g^{\star}_{\max}]}[\widetilde g_\ell], \quad \widetilde g_\ell=\mathcal{C}^{N}[\widehat{g}_{\ell-1}]/\|\mathcal{C}^{N}[\widehat{g}_{\ell-1}]\|_{L_1}, \quad \ell\geq1
\end{equation}
with  $\widehat{g}_{0}:=g_{0}.$
Here, for any \(0<a<b<\infty, \)  \(\mathcal{T}_{[a,b]}\) is a truncation operator of the form
\[
\mathcal{T}_{[a,b]}[f]:=
\begin{cases}
a, & f(x)\leq a,
\\
f(x), & a < f(x) \leq  b,
\\
b,   &  f(x)> b.
\end{cases}
\]
Finally, we define the corresponding approximating sequence 
for \(\nu_T\) as \(\widetilde\nu_{T}=\rho_{T}/\widehat{g}%
_{\ell}\) for some $\ell>1.$

\section{Convergence analysis}\label{sec:conv}
\subsection{Upper bounds}
Following \cite{CGP2016}, $\mathcal{E}_{0}$ and  $\mathcal{E}_{T}$ are $d_H$-contractions
with contraction coefficients \(\kappa(  \mathcal{E}_{0})\) and \(\kappa(  \mathcal{E}_{T}),\) respectively satisfying
\[
\max\{\kappa(  \mathcal{E}_{0}), \kappa(  \mathcal{E}_{T})\} \leq \tanh\left(\frac{1}{2}\log(  q_{\max}/q_{\min})\right)  <1.
\]
 Moreover,
$\mathcal{D}_{0},$ \(\mathcal{D}_{T}\) are $d_H$-isometries on $\mathcal{L}_{+}^{\infty}(\mathsf{S}_{0})$ and $\mathcal{L}_{+}^{\infty}(\mathsf{S}_{T}),$ respectively. Hence, $\mathcal{C}$ in (\ref{cdec}) is a contraction
on $\mathcal{L}_{+}^{\infty}(\mathsf{S}_{T})$ with contraction coefficient $\kappa\left(
\mathcal{C}\right)  \leq$ $\tanh^{2}(\frac{1}{2}\log( q_{\max}/q_{\min} 
)  )<1$ with respect to the Hilbert metric $d_H$. The following proposition holds.

\begin{proposition}
\label{prop:conv-main} 
Let the kernel function $K:\mathbb{R}^{d}\to
\mathbb{R}_{+}$ satisfy

\begin{itemize}
\item $\|K\|_{\infty}=K_{\infty}<\infty,$ $\int K(x)\,dx=1$ and $\int
x_{i}K(x)\,dx=0$ for $i=1,\ldots,d;$

\item $K$ has a support contained in $\left[ -\frac{1}{2},\frac{1}{2}\right]
^{d}$;

\item For any fixed $\gamma>0,$ the class $\mathcal{K}=\{x\mapsto
K(\gamma(x-z)):z\in\mathbb{R}^{d}\}$ is a measurable VC-type class of
functions from $\mathbb{R}^{d}$ to $\mathbb{R}$.
\end{itemize}

Suppose that $\min\left(  \inf_{S_{0}}\phi_{0},\inf_{S_{T}}\phi_{T}\right)
\geq\phi_{\min}>0$ and that
\begin{align*}
q(0,\cdot;T,z)\phi_{0}(\cdot)  & \in\mathcal{H}^{1,\alpha}(\overline{U}%
_{0})\text{ \ for any }z\in\mathsf{S}_{T},\text{\ }\\
q(0,x;T,\cdot)\phi_{T}(\cdot)  & \in\mathcal{H}^{1,\alpha}(\overline{U}%
_{T})\text{ \ for any }x\in\mathsf{S}_{0},
\end{align*}
such that moreover%
\[
\max\left(  \sup_{z\in\mathsf{S}_{T}}\left\Vert q(0,\cdot;T,z)\phi_{0}%
(\cdot)\right\Vert _{\mathcal{H}^{1,\alpha}(\overline{U}_{0})},\sup
_{x\in\mathsf{S}_{0}}\left\Vert q(0,x;T,\cdot)\phi_{T}(\cdot)\right\Vert
_{\mathcal{H}^{1,\alpha}(\overline{U}_{T})}\right)  \leq B_{q}%
\]
for some $\alpha\in(0,1].$ For a recap on H\"older spaces we refer to Appendix~\ref{SmoothC}. Then we have under the choice $\delta
_{N}=N^{-2/(2(1+\alpha)+d)}$,
\[
\mathbb{E}\left[  d_{H}(\widehat{g}_{k},g^{\star})\right]  \lesssim
(1-\kappa(\mathcal{C}))^{-1}N^{-\frac{1+\alpha}{2(1+\alpha)+d}}+(\kappa
(\mathcal{C}))^{k}d_{H}(g_{0},g^{\star})
\]
where $\lesssim$ stands for inequality up to a constant depending on $q_{\min
},q_{\max},\rho_{\min},\rho_{\max}$ and $B_{q}.$

\end{proposition}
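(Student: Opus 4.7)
My plan is a perturbation analysis of the Picard iteration: combine the Hilbert-metric contraction of $\mathcal{C}$ from \cite{CGP2016} with a uniform statistical bound on the one-step approximation error $d_H(\mathcal{C}^N[g],\mathcal{C}[g])$. Set $e_\ell := d_H(\widehat g_\ell, g^\star)$. Since $d_H$ is scale-invariant, the $L^1$-normalization in \eqref{eq:picar-iter} drops out, and since $g^\star\in[g^\star_{\min},g^\star_{\max}]$ by \eqref{eq:prior-g}, the truncation step is absorbed by a short direct comparison of the pointwise ratios $\widehat g_\ell/g^\star$. The triangle inequality and the $\kappa(\mathcal{C})$-contraction of $\mathcal{C}$ then give the recursion
\[
e_\ell \;\le\; \underbrace{d_H\!\bigl(\mathcal{C}^N[\widehat g_{\ell-1}],\mathcal{C}[\widehat g_{\ell-1}]\bigr)}_{=:\,\varepsilon_\ell} + \kappa(\mathcal{C})\,e_{\ell-1},
\]
which unfolds to $e_k \le \kappa(\mathcal{C})^k\,d_H(g_0,g^\star) + (1-\kappa(\mathcal{C}))^{-1}\sup_{\ell\le k}\varepsilon_\ell$. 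This is exactly the structure of the advertised bound, modulo controlling $\mathbb{E}[\sup_{\ell\le k}\varepsilon_\ell]$ by the stated rate.

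For $\varepsilon_\ell$ I would exploit the factorization \eqref{cdec}. Setting $h := 1/\widehat g_{\ell-1}$ and $\tilde u := 1/\mathcal{E}_T^N[h]$, and using that $\mathcal{D}_0,\mathcal{D}_T$ are $d_H$-isometries while $\mathcal{E}_0$ is a $d_H$-contraction, a single triangle insertion yields
\[
\varepsilon_\ell \;\le\; d_H\!\bigl(\mathcal{E}_0^N[\tilde u],\mathcal{E}_0[\tilde u]\bigr) + \kappa(\mathcal{E}_0)\,d_H\!\bigl(\mathcal{E}_T^N[h],\mathcal{E}_T[h]\bigr).
\]
Each summand is converted into an $L^\infty$-type bound via the elementary estimate $d_H(f_1,f_2)\le 2\|f_1-f_2\|_\infty/\min\{\inf f_1,\inf f_2\}$, whose denominator is deterministically bounded below by a multiple of $q_{\min}\rho_{\min}/g^\star_{\max}$ thanks to \eqref{eq:E-bounds}--\eqref{eq:EN-bounds} and the truncation-enforced enclosure $h\in[1/g^\star_{\max},1/g^\star_{\min}]$.

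It then remains to control $\|\mathcal{E}_T^N[h]-\mathcal{E}_T[h]\|_\infty$ and its $\mathcal{E}_0^N$-analogue. Writing $\mathcal{E}_T^N[h](x)=Q_T(x)\,S_N[\rho_T h](x)/S_N[\mathbf{1}_{\mathsf{S}_T}](x)$ as a Nadaraya--Watson ratio, the error splits into a bias and a stochastic fluctuation. The bias is of order $\delta_N^{1+\alpha}$ by Taylor-expanding $z\mapsto q(0,z;T,\cdot)\phi_0(z)\in\mathcal{H}^{1,\alpha}$ and using the kernel conditions $\int K=1,\int x_iK=0$; the stochastic part is of order $\sqrt{\log N/(N\delta_N^d)}$ via Talagrand-type uniform concentration over the VC-type class $\mathcal{K}$. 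Balancing with the stated choice of $\delta_N$ yields the rate $N^{-(1+\alpha)/(2(1+\alpha)+d)}$. The $\mathcal{E}_0^N-\mathcal{E}_0$ estimate is symmetric, with $(Y,\mathcal{Y})$ playing the role of $X$ and the likelihood weight $\mathcal{Y}_T^z$ absorbed into a bounded function class.

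The main obstacle is that $h=1/\widehat g_{\ell-1}$ is random and built from the \emph{same} Monte Carlo sample as $\mathcal{C}^N$, so a naive fixed-$h$ concentration inequality does not suffice. The remedy is to combine (i) the linearity of $S_N$ and $\widetilde S_N$ in their function argument, (ii) the deterministic a priori enclosure $h\in[1/g^\star_{\max},1/g^\star_{\min}]$ enforced by the truncation, and (iii) the VC hypothesis on $\mathcal{K}$, which together reduce the required control to a uniform empirical-process bound, indexed by the evaluation point $x$ alone, over a deterministic bounded class of functions. Talagrand's inequality then delivers a bound independent of $\ell$, which controls $\mathbb{E}[\sup_{\ell\le k}\varepsilon_\ell]$ and completes the proof.
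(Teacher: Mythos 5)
Your proposal is correct and follows essentially the same route as the paper: the same contraction recursion in $d_H$ (with the truncation and normalization absorbed via the Hilbert-metric comparison lemmas), the same two-term decomposition of the one-step error $d_H(\mathcal{C}^N[f],\mathcal{C}[f])$ using the isometry of $\mathcal{D}_0,\mathcal{D}_T$ and the contractivity of $\mathcal{E}_0$, the same conversion to sup-norm bounds via the elementary estimate $d_H(f_1,f_2)\le 2\|f_1-f_2\|_\infty/\min(\inf f_1,\inf f_2)$, and the same bias/variance balancing of the Nadaraya--Watson estimators. The only difference is presentational: the paper handles the dependence of $\widehat g_{\ell-1}$ on the sample by conditioning on the $\sigma$-algebra $\mathcal{F}_{\ell-1}$ and then applying the uniform-over-class concentration bound with the normalized (hence $\le 1$) argument functions, which is the same device you describe for reducing to a deterministic bounded class.
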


\begin{cor} \label{cor:rate} Take 
\(k\geq \frac{1+\alpha}{2(1+\alpha)+d}\log (N)/\log(1/\kappa(\mathcal{C}))\) then we have 
\begin{align*}
\mathbb{E}\left[  d_{H}(\widehat{g}_{k},g^{\star})\right]   &    \lesssim N^{-\frac{1+\alpha}{2(1+\alpha)+d}}.
\end{align*}
Moreover, it holds 
\begin{eqnarray}
\label{eq:rate}
\mathbb{E}\left[\|\widehat{g}_{k}-g^{\star}\|_\infty\right]\lesssim N^{-\frac{1+\alpha}{2(1+\alpha)+d}}.
\end{eqnarray}
Here, \(\lesssim\) stands for inequality up to a constant depending on \(q_{\min},q_{\max},\)  \(\rho_{\min},\rho_{\max}\) and \(B_q.\)
\end{cor}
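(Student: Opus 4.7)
The plan splits Corollary~\ref{cor:rate} into its two claims: the first is a direct substitution into Proposition~\ref{prop:conv-main}, and the second translates the resulting Hilbert-metric bound into a sup-norm bound by exploiting the boundedness of the iterates together with their $L_1$-normalization.

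For the first claim, I would check that the prescribed lower bound on $k$ yields $\kappa(\mathcal{C})^k \le \exp\!\bigl(-\tfrac{1+\alpha}{2(1+\alpha)+d}\log N\bigr) = N^{-(1+\alpha)/(2(1+\alpha)+d)}$, so that the second term in Proposition~\ref{prop:conv-main} is dominated by the first. Absorbing $(1-\kappa(\mathcal{C}))^{-1}$ and $d_H(g_0,g^\star)$, both finite constants depending only on the problem data, into the symbol $\lesssim$ gives the claim.

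For the second claim, the key elementary lemma I would prove is the following: whenever $f,g\in\mathcal{L}_{+}^{\infty}(\mathsf{S}_{T})$ take values in $[m,M]\subset(0,\infty)$ and satisfy $\int f = \int g$, then $\|f-g\|_\infty \le M\bigl(e^{d_H(f,g)}-1\bigr)$. Its proof is a one-liner: with $\epsilon=d_H(f,g)$, the definition of the Hilbert metric produces $c>0$ with $c\,g \le f \le c\,e^\epsilon g$ pointwise; integrating against the constant $1$ and using equality of $L_1$-norms forces $c\in[e^{-\epsilon},1]$, from which $|f-g|\le(e^\epsilon-1)\,M$ follows pointwise. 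I would then apply this with $f=\widetilde g_k$ (which satisfies $\int\widetilde g_k=1=\int g^\star$ by construction and Assumption~\ref{ass:nuT}) and $g=g^\star$, and transfer to $\widehat g_k$ via the pointwise inequality $|\widehat g_k - g^\star|\le|\widetilde g_k-g^\star|$, which holds because truncation to $[g^\star_{\min},g^\star_{\max}]$ is an $\|\cdot\|_\infty$-contraction towards any function lying in that interval, and $g^\star$ does by~\eqref{eq:prior-g}.

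The main obstacle is a bookkeeping issue at the junction between $\widetilde g_k$ and $\widehat g_k$: the lemma needs a $d_H$-bound on $\widetilde g_k$, while Proposition~\ref{prop:conv-main} is stated only for $\widehat g_k$. I would resolve this by observing that truncation to an interval containing $g^\star$ is itself a $d_H$-contraction towards $g^\star$ (a short computation shows that both $\sup f/g^\star$ and $\inf f/g^\star$ are pushed towards~$1$), and that the $L_1$-normalization step is a $d_H$-isometry. Plugging these facts into the contraction argument that underlies Proposition~\ref{prop:conv-main} should yield the same expected $d_H$-rate for $\widetilde g_k$ as for $\widehat g_k$; combined with the transfer of the previous paragraph this delivers the expected sup-norm rate $N^{-(1+\alpha)/(2(1+\alpha)+d)}$.
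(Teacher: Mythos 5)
Your handling of the first claim is exactly the intended one: the stated lower bound on $k$ gives $\kappa(\mathcal{C})^{k}\le N^{-(1+\alpha)/(2(1+\alpha)+d)}$, and the remaining factors are constants absorbed into $\lesssim$.

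For the sup-norm claim you take a genuinely different, but valid, route. The paper's toolkit is set up so that one applies the \emph{second} inequality of Lemma~\ref{lemH} directly to $f=\widehat g_k$, $g=g^\star$ (both lie in $[g^\star_{\min},g^\star_{\max}]$ by the truncation and \eqref{eq:prior-g}), giving $\|\widehat g_k-g^\star\|_\infty\le \bigl((g^\star_{\max})^2/g^\star_{\min}\bigr)\,d_H(\widehat g_k,g^\star)$; the hypothesis $\sup(\widehat g_k/g^\star)\ge 1\ge\inf(\widehat g_k/g^\star)$ is supplied by Lemma~\ref{HmTr1}: since $\int\widetilde g_k=\int g^\star=1$ and both are continuous, there is $x_0$ with $\widetilde g_k(x_0)=g^\star(x_0)\in[g^\star_{\min},g^\star_{\max}]$, a value the truncation leaves unchanged. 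You instead prove an equivalent comparison lemma for pairs of equal $L_1$-mass, apply it to $\widetilde g_k$, and transfer to $\widehat g_k$ by the pointwise $1$-Lipschitzness of $\mathcal{T}_{[g^\star_{\min},g^\star_{\max}]}$ toward $g^\star$. That works, and it has the advantage that the crossing condition is automatic from the equal integrals rather than needing Lemma~\ref{HmTr1} at the last step; the price is that you must control $\widetilde g_k$ rather than $\widehat g_k$, which you correctly observe is already an intermediate quantity in the proof of Proposition~\ref{prop:conv-main} (the chain there passes through $d_H(\widehat g_\ell,g^\star)\le d_H(\widetilde g_\ell,g^\star)\le\widehat\varepsilon_{\ell-1}+\kappa(\mathcal{C})d_H(\widehat g_{\ell-1},g^\star)$, so the same rate holds for $\widetilde g_k$).

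Two loose ends should be tightened. First, your lemma yields $\|\widetilde g_k-g^\star\|_\infty\le M\bigl(e^{d_H(\widetilde g_k,g^\star)}-1\bigr)$, and to replace $e^{d_H}-1$ by a constant times $d_H$ inside the expectation you need a deterministic a priori bound on $d_H(\widetilde g_k,g^\star)$; this holds because, by \eqref{eq:EN-bounds}, the output of $\mathcal{C}^N$ has $\sup/\inf$ ratio bounded by a constant in $q_{\min},q_{\max},Q_{\min},Q_{\max},\rho_{\min},\rho_{\max}$, but it has to be stated. Second, your parenthetical claim that truncation onto an interval containing $g^\star$ is \emph{unconditionally} a $d_H$-contraction toward $g^\star$ is false: take $f=c\,g^\star$ with $c>1$ chosen so that $c\,g^\star$ exceeds $g^\star_{\max}$ on part but not all of $\mathsf{S}_T$; then $d_H(f,g^\star)=0$ while $d_H(\mathcal{T}_{[g^\star_{\min},g^\star_{\max}]}f,g^\star)>0$. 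The contraction property is Lemma~\ref{Dhineq} and requires the crossing condition \eqref{Hmcon}, which in the present application is again furnished by Lemma~\ref{HmTr1}/Corollary~\ref{HmTr}. Both repairs use only ingredients already in the paper, so the proof is complete once they are inserted.
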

\subsection{Lower bounds}
We  present now lower bounds showing that the rates of Corollary~\ref{cor:rate} cannot be improved in general. 
For this it is enough to work under the hypothetical assumption
that $\mathcal{E}_{0}\circ\mathcal{D}_{0}$ is known exactly. That is, rather
than (\ref{Cappr}) we consider the iterative procedure described in
Section~\ref{sec:aip} with respect to the noisy operator%
\[
\mathcal{E}_{0}\circ\mathcal{D}_{0}\circ\mathcal{E}_{T}^{N}\circ
\mathcal{D}_{T}.
\]
\begin{thm}
\label{thm:lb}
Fix some $\alpha\in (0,1]$ and define a class $\mathcal{Q}_{\alpha}$ $\equiv$ \(\mathcal{Q}_{\alpha}(q_{\min},q_{\max})\) of continuous and strictly positive transition densities $q$ on \(\mathbb{R}^d\times \mathbb{R}^d\)  that satisfy
\[
0<q_{\min}\leq q(0,x;T,z) \leq q_{\max},\quad x,z\in [0,1]^d\times [0,1]^d,
\] 
considered that $\mathsf{S}_{0}$ $=$ $\mathsf{S}_{T}$ $=$ $[0,1]^d$. 
Suppose that both $\rho_{0}$ and $\rho_{T}$ are distribution densities on
$[0,1]^{d}$ satisfying Assumption ~\ref{ass:qrho}. 
Suppose that
\[
 \sup_{z\in [0,1]^d}\left\Vert q(0,\cdot;T,z)\rho_{0}%
(\cdot)\right\Vert _{\mathcal{H}^{1,\alpha}([0,1]^d)}  \leq B_{\alpha},\qquad q\in\mathcal{Q}_{\alpha}.
\]
It then holds
\begin{eqnarray}
\label{eq:lower-bound}
\inf_{\widehat{g}}\sup_{q\in \mathcal{Q}_{\alpha}}\mathbb{E}_q\left[d_H(\widehat{g},g^{\star})\right]\gtrsim N^{-\frac{1+\alpha}{2(1+\alpha)+d}}
\end{eqnarray}
where \(\mathbb{E}_q\) stands for expectation under the joint distribution of \((X_0,X_T)\sim \rho_0(x)q(0,x;T,z)\) and infimum is taken over all  estimates $\widehat{g}$ of \(g^{\star}\) solving 
\begin{eqnarray*}
\int_{\mathsf{S}_{0}}%
\frac{\rho_{0}(x)}{\int_{\mathsf{S}_{T}}q(0,x;T,z')\frac{\rho_{T}(z')}{g^{\star}(z')}\, dz'}
q(0,x;T,z)\, dx=g^{\star}(z)
\end{eqnarray*}
based on a iid sample from \((X_0,X_T)\) of the length \(N.\)
\end{thm}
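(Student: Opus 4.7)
\textbf{Proof proposal for Theorem~\ref{thm:lb}.} The plan is to apply the standard Tsybakov--Fano minimax lower-bound recipe to a Varshamov--Gilbert family of perturbations of the transition density, reducing the problem to nonparametric estimation of a $\mathcal{H}^{1,\alpha}$-regular function on $[0,1]^d$, whose minimax $L^\infty$ rate is precisely $N^{-(1+\alpha)/(2(1+\alpha)+d)}$.

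\emph{Hypotheses.} Fix the base choices $\rho_0\equiv\rho_T\equiv 1$ on $[0,1]^d$ and $q_0\equiv 1$ on $[0,1]^{2d}$; the corresponding normalized fixed point of~\eqref{Cop} is $g_0^\star\equiv 1$. Let $\psi\in C_c^\infty((0,1)^d)$ with $\int\psi=0$ and attaining both signs. For a bandwidth $h>0$ let $\{z_k\}_{k=1}^m$, $m:=h^{-d}$, be an $h$-grid of centers in $[0,1]^d$ and set
\[
\psi_k(z):=h^{1+\alpha}\psi\bigl((z-z_k)/h\bigr),\qquad q_\theta(0,x;T,z):=1+\epsilon\sum_{k=1}^m\theta_k\psi_k(z),\quad \theta\in\{0,1\}^m,
\]
with $\epsilon>0$ a small absolute constant. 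Disjointness of supports and the scaling of $\psi_k$ yield $\|q_\theta\|_{\mathcal{H}^{1,\alpha}}\leq B_\alpha$ and $q_{\min}\leq q_\theta\leq q_{\max}$ uniformly in $\theta$, while $\int\psi_k=0$ forces $\int q_\theta(0,x;T,z)\,dz=1$, so $q_\theta\in\mathcal{Q}_\alpha$.

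\emph{Separation and information.} A first-order expansion of the fixed-point equation $\mathcal{C}^{(\theta)}[g_\theta^\star]=g_\theta^\star$ about $(q_0,g_0^\star)$, carried out under the normalization $\int g_\theta^\star=1$ and exploiting $\int\psi_k=0$ (which kills the denominator correction at order $\epsilon$), yields the representation $g_\theta^\star(z)=1+\epsilon\sum_k\theta_k\psi_k(z)+O(\epsilon^2)$ uniformly in $\theta$. Combined with the Hilbert-metric expansion $d_H(1+\epsilon u,1+\epsilon v)=\epsilon\,\mathrm{osc}(u-v)+O(\epsilon^2)$, the disjoint supports and the sign change of $\psi$ give the separation $d_H(g_\theta^\star,g_{\theta'}^\star)\geq c\,\epsilon\,h^{1+\alpha}$ for any $\theta\neq\theta'$. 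The data being $N$ iid draws from $\mathbb{P}_\theta(dx,dz)=q_\theta(0,x;T,z)\,dx\,dz$ on $[0,1]^{2d}$, disjointness of the $\psi_k$ together with their $L^2$ scaling produces
\[
KL\bigl(\mathbb{P}_\theta^{\otimes N}\,\bigl\|\,\mathbb{P}_{\theta'}^{\otimes N}\bigr)\leq CN\epsilon^2\sum_{k}(\theta_k-\theta'_k)^2\,h^{2(1+\alpha)+d}\leq C' N\epsilon^2 h^{2(1+\alpha)}.
\]

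\emph{Conclusion and main obstacle.} By Varshamov--Gilbert there is $\Omega\subset\{0,1\}^m$ with $|\Omega|\geq 2^{m/8}$ and $\log|\Omega|\gtrsim h^{-d}$; choosing $h=c_0 N^{-1/(2(1+\alpha)+d)}$ balances $N\epsilon^2 h^{2(1+\alpha)}\asymp h^{-d}$, and for $\epsilon$ sufficiently small a standard Fano-type inequality gives $\inf_{\widehat{g}}\max_{\theta\in\Omega}\mathbb{P}_\theta\bigl(d_H(\widehat{g},g_\theta^\star)\geq \tfrac{c}{2}\epsilon h^{1+\alpha}\bigr)\geq \tfrac12$, which Markov's inequality upgrades to the claimed $\mathbb{E}_q[d_H(\widehat{g},g^\star)]\gtrsim h^{1+\alpha}\asymp N^{-(1+\alpha)/(2(1+\alpha)+d)}$. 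The main technical obstacle is the quantitative first-order expansion of $g_\theta^\star$ with an $O(\epsilon^2)$ remainder \emph{uniform} in $\theta$, so that the nonlinear correction does not swamp the leading separation $\epsilon\,h^{1+\alpha}$. This uniformity is provided by the Hilbert-metric contraction of $\mathcal{C}^{(\theta)}$ on $\mathcal{L}^\infty_+(\mathsf{S}_T)$ (Birkhoff under Assumption~\ref{ass:qrho}), which propagates perturbations of $q$ of size $\epsilon\,h^{1+\alpha}$ into perturbations of $g_\theta^\star$ of the same order with second-order error of order their square; translating this between $\|\cdot\|_\infty$ and $d_H$ is the last bit of bookkeeping.
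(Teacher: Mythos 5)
Your proposal is correct and rests on the same core mechanism as the paper's proof: a smooth localized perturbation of $q(0,x;T,\cdot)$ in the $z$-variable at spatial scale $h=N^{-1/(2(1+\alpha)+d)}$ with amplitude $h^{1+\alpha}$, transferred to the fixed point $g^{\star}$, with $\mathrm{KL}\lesssim N h^{2(1+\alpha)+d}$ per bump, followed by a standard information-theoretic reduction. The execution differs in three respects worth comparing. First, the paper uses only \emph{two} hypotheses (a single bump versus none, via the two-point Kullback--Leibler scheme of Tsybakov), which already yields the rate because $d_H$ is a sup-type loss and the target rate carries no logarithmic factor; your Varshamov--Gilbert plus Fano construction with $2^{h^{-d}}$ hypotheses is valid but unnecessary overhead, since the pairwise $d_H$-separation you obtain is no larger than what one differing bump gives. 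Second, the paper perturbs multiplicatively around a general convolution kernel $Q(x-y)$, setting $Q_{\psi}=\xi\,\psi_{1}\psi_{2}\,Q$, and then expresses the perturbed fixed point \emph{exactly} as $g_{\psi}=\theta\,\psi_{2}\,g_{1}$ with an explicit scalar $\theta$, so no linearization is ever invoked; the $d_H$-separation is then extracted from the pointwise discrepancy at the bump center via Lemma~\ref{HmTr1} and Lemma~\ref{lemH}. This bears directly on what you flag as the ``main technical obstacle'': for your perturbation $q_{\theta}=1+\epsilon\sum_{k}\theta_{k}\psi_{k}(z)$, which depends on $z$ only, the fixed-point equation separates (the inner integral in \eqref{Cop} is a constant independent of $x$), so $g_{\theta}^{\star}=1+\epsilon\sum_{k}\theta_{k}\psi_{k}$ holds \emph{exactly} under your normalization and the uniform $O(\epsilon^{2})$ expansion you defer to Birkhoff contractivity is not needed at all — you should observe this rather than leave it as an open issue. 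Third, a small mismatch with the statement: Theorem~\ref{thm:lb} fixes arbitrary $\rho_{0},\rho_{T}$ satisfying Assumption~\ref{ass:qrho} and takes the supremum over $q$ only, whereas you specialize to $\rho_{0}\equiv\rho_{T}\equiv 1$; this is harmless because your separable perturbation goes through verbatim for general $\rho_{0},\rho_{T}$ (the fixed point remains proportional to $1+\epsilon\Sigma_{\theta}$ and the sample law remains a product in which only the $z$-marginal depends on $\theta$), but as written your argument proves a slightly weaker claim than the one stated.
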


\section{Simulation of Schr\"odinger Bridges}\label{SBsim}

It is known (see e.g. \cite{daipra1991}) that the Schr\"odinger Markov process $X$  can be constructed as a solution of the following SDE: 
\begin{equation}\label{h_dyn}
    d X_{t}
    = \left( b(X_{t}, t) + \sigma(X_{t}, t) \sigma(X_{t}, t)^\top \nabla\log h(X_{t}, t) \right) \, d t + \sigma(X_{t}, t) \, d W_{t}
    \end{equation}
with \(X_{0} \sim \rho_0\), 
where
\[
    h(w, t)
    = \int_{\mathsf{S}_T} q(t,w;T,y) \, \nu_{T}  (y)\,dy
\]
and $q$ is the transition density of reference process corresponding to
(\ref{h_dyn}) with $h$ being constant.
Let \(\widetilde{\nu}_T\) be an  estimate for \(\nu_T\) obtained 
by the procedure in Section~\ref{sec:aip}. We then have
\[
        \nu_{\min} \leq \nu_T(y), \, \widetilde{\nu}_T(y) \leq \nu_{\max} \quad \text{for all } y \in \mathsf{S}_T
    \]
   for some \(\nu_{\min},\nu_{\max}>0.\) Consider the approximated process
\[
    d \widetilde{X}_{t}
    = \left( b(\widetilde{X}_{t}, t) + \sigma(\widetilde{X}_{t}, t) \sigma(\widetilde{X}_{t}, t)^\top \nabla\log \widetilde{h}(\widetilde{X}_{t}, t) \right) \, d t + \sigma(\widetilde{X}_{t}, t) \, d W_{t}
    \]
with \(X_{0} \sim \rho_0\) 
where
\[
    \widetilde{h}(w, t)
    = \int_{\mathsf{S}_T} q(t,w;T,y) \, \widetilde{\nu}_{T}  (y)\,dy.
\] 
Let $\Delta(x,t) := \nabla_x \log h(x,t) - \nabla_x \log \widetilde{h}(x,t)$, and assume $\sigma(x,t)$ is invertible. The KL divergence between the laws of two diffusion processes $\mathbb{P}_{[0,T-\delta]}$ and $\widetilde{\mathbb{P}}_{[0,T-\delta]}$ (on the time interval $[0,T-\delta]$) can be expressed using Girsanov's theorem in terms of $\Delta:$
$$
\mathrm{KL}(\mathbb{P}_{[0,T-\delta]} \,\|\, \widetilde{\mathbb{P}}_{[0,T-\delta]})
= \frac{1}{2} \mathbb{E}^{\mathbb{P}} \left[ \int_0^{T-\delta} \left\| \sigma^{-1}(X_t,t) \sigma(X_t,t)\sigma^\top(X_t,t) \Delta(X_t,t) \right\|^2 dt \right].
$$
Since $\sigma^{-1} \sigma \sigma^\top = \sigma^\top$,  we simplify
$$
\mathrm{KL}(\mathbb{P}_{[0,T-\delta]} \,\|\, \widetilde{\mathbb{P}}_{[0,T-\delta]})
= \frac{1}{2} \mathbb{E}^{\mathbb{P}} \left[ \int_0^{T-\delta} \left\| \sigma^\top(X_t,t) \Delta(X_t,t) \right\|^2 dt \right].
$$
Furthermore, we have
\begin{multline*}
\mathbb{E} \left[ \left\| \nabla_x \log \widetilde{h}(X_t,t) - \nabla_x \log h(X_t,t) \right\|^2 \right]
\leq \frac{\| \widetilde{\nu}_T - \nu_T \|_{\mathsf{S}_T}^2}{\nu_{\min}^2} 
\\
\cdot \mathbb{E} \left[
\left( \sup_{y \in \mathsf{S}_T} \| \nabla_x \log q(t,X_t;T,y) \| + \| \nabla_x \log h(X_t,t) \| \right)^2
\right].
\end{multline*}
Now assume that the potential \( \nu_T \) is supported on \( \mathsf{S}_T \subseteq B_R(0) \subset \mathbb{R}^d \)
and the transition density \(q\) of the reference process satisfies
\[
    \sup_{y \in B_R(0)} \| \nabla_x \log q(t,x;T,y) \| \leq C\cdot\frac{\|x\| + R}{T - t}, \quad x\in \mathbb{R}^d
\]    
for some absolute constant \(C>0.\)  Under the above assumptions,
\[
\left\| \nabla_x \log h(x,t) \right\| \leq C \cdot \frac{\|x\| + R}{T - t} 
\]
and 
\[
\mathbb{E} \left[ \left\| \nabla_x \log \widetilde{h}(X_t,t) - \nabla_x \log h(X_t,t) \right\|^2 \right]
\leq 2\cdot C\cdot \frac{\| \widetilde{\nu}_T - \nu_T \|_{\mathsf{S}_T}^2}{\nu_{\min}^2 (T - t)^2} \cdot \mathbb{E} \left[ (\|X_t\| + R)^2 \right].
\]
Assume that $\|\sigma\|_{\infty}\leq \sigma_{\max}<\infty,$ then
\[
\mathrm{KL}(\mathbb{P}_{[0,T-\delta]} \,\|\, \widetilde{\mathbb{P}}_{[0,T-\delta]})\lesssim \sigma_{\max}^2\delta^{-1}\| \widetilde{\nu}_T - \nu_T \|_{\mathsf{S}_T}^2. 
\]
So we see that the bound explodes if $\delta\to 0$ meaning that simulation of the SB can be difficult especially under estimated $\nu_T.$ If one only needs some expected functionals of the SB depending on its finite dimensional distributions, we propose a more efficient way  of estimation in the next section.
\section{Forward-Reverse simulation for reciprocal and Schr\"{o}dinger processes}
\label{FRSP} 

In Appendix~\ref{sec:rp} we have recapitulated the concept of reciprocal processes
in general and  Schr\"{o}dinger processes in particular, being reciprocal Markov process with endpoint distribution 
satisfying  (\ref{eq:schro_factor}).
In this section we propose simulation based approaches for
estimating functionals of the form (\ref{eq:schroe-findim}), hence the finite dimensional distributions of such processes. The here proposed methods may be seen as an application of the forward-reverse approach developed 
in \cite{BS2014}, recapitulated in Appendices~\ref{subsec:reverse_diffusion_construction}-\ref{RecapFR}, combined  with the simulation based construction of the   Schr\"{o}dinger measures or potentials developed in Section~\ref{sec:aip}.

\subsection{Stochastic representations for reciprocal processes}

By combining (\ref{eq:schroe-findim}) (see also (\ref{endpdis})) with (\ref{FRrep}) we
immediately obtain an FR stochastic representation for the finite dimensional
distributions of a reciprocal process due to a begin-endpoint measure
$\mu(dx,dz)$:%
\begin{multline}
\mathbb{E}\left[  g(X_{0}^{\mu},X_{t_{1}}^{\mu},\ldots,X_{t_{n}}^{\mu}%
,X_{T}^{\mu})\right]  =\int_{\mathbb{R}^{d}\times\mathbb{R}^{d}}%
\mu(dx,dz)\mathcal{E}(g(x,\cdot,z);x,z)\label{FRschroe}\\
=\int_{\mathbb{R}^{d}\times\mathbb{R}^{d}}\mu(dx,dz)\frac{H(g(x,\cdot
,z);x,z)}{q(0,x,T,z)}\\
=\int_{\mathbb{R}^{d}\times\mathbb{R}^{d}}\mu(dx,dz)\frac{\lim_{\varepsilon
\downarrow0}H_{\varepsilon}(g(x,\cdot,z);x,z)}{q(0,x,T,z)}%
\end{multline}
for any bounded measurable $g:\left(  \mathbb{R}^{d}\right)  ^{(K+L+1)}%
\rightarrow\mathbb{R}$.

Due to the FR simulation procedure for the representation (\ref{FRrep}),
a straightforward simulation procedure for (\ref{FRschroe}) suggests itself:
One may sample a number of pairs $(X_{0}^{(r)},Z_{T}^{(r)}),$ $r=1,\ldots,R,$
from the distribution $\mu.$ Then for each particular drawing $r$ one may
approximate $H(g(X_{0}^{(r)},\cdot,Z_{T}^{(r)});X_{0}^{(r)},Z_{T}^{(r)})$ and
$q(0,X_{0}^{(r)},T,Z_{T}^{(r)})$, and hence $\mathcal{E}(g(X_{0}^{(r)}%
,\cdot,Z_{T}^{(r)});X_{0}^{(r)},Z_{T}^{(r)})$ in (\ref{FRrep}) for the pair
$(X_{0}^{(r)},Z_{T}^{(r)}),$ using $N$ trajectories of $X$ and $N$
trajectories of $(Y,\mathcal{Y})$ according to (\ref{SDE}) and (\ref{Yrev}).
One finally takes the average over $R$ estimations in order to obtain an
estimate of (\ref{FRschroe}). Obviously, this nested simulation procedure will
be generally slow as it requires the simulation of order $NR$ trajectories. In
the next section we propose a more efficient (non-nested) Monte Carlo procedure
for computing (\ref{FRschroe}) in the case of a Schr\"odinger process.

\subsection{Stochastic representations for Schr\"{o}dinger processes}

A Schr\"{o}dinger Markov process is determined by a begin-endpoint
distribution $\mu$ of the form (\ref{eq:schro_factor}) due to $\sigma$-finite Borel
measures $\nu_{0}$ and $\nu_{T}$ satisfying%
\[
\int_{\mathbb{R}^{d}\times\mathbb{R}^{d}}\nu_{0}(dx)q(0,x,T,z)\nu_{T}(dz)=1.
\]
Conversely, any pair of Borel measures $\widetilde{\nu}_{0}$ and
$\widetilde{\nu}_{T}$ on $\mathbb{R}^{d}$ with%
\begin{equation}
0<c_{0,T}^{-1}:=\int_{\mathbb{R}^{d}\times\mathbb{R}^{d}}\widetilde{\nu}%
_{0}(dx)q(0,x,T,z)\widetilde{\nu}_{T}(dz)<\infty\label{c0t}%
\end{equation}
gives rise to a begin-endpoint distribution of the form (\ref{eq:schro_factor}) due to%
\begin{align}
\mu(dx,dz)  &  =c_{0,T}\widetilde{\nu}_{0}(dx)q(0,x,T,z)\widetilde{\nu}%
_{T}(dz)\label{schroe11}\\
&  =:\nu_{0}(dx)q(0,x,T,z)\nu_{T}(dz)\nonumber
\end{align}
with $\nu_{0}=c_{0,T}^{1/2}\widetilde{\nu}_{0}$ and $\nu_{T}=c_{0,T}%
^{1/2}\widetilde{\nu}_{T}.$ Obviously, if $\mu$ is defined via (\ref{c0t}) and
(\ref{schroe11}) for given $\widetilde{\nu}_{0}$ or $\widetilde{\nu}_{T},$ it
is invariant under scaling of $\widetilde{\nu}_{0}$ or $\widetilde{\nu}_{T}$
by an arbitrary positive constant. Thus if, moreover, $\widetilde{\nu}_{0}$ or
$\widetilde{\nu}_{T}$ is a finite measure, we may w.l.o.g. assume that it is a
probability measures.

We now assume that, either, we are given a pair of probability measures $\nu_0$ and $\nu_T$ that define an endpoint distribution $\mu$ in (\ref{eq:schro_factor}), or we are given $\mu$ and assume
that $\nu_0$ and $\nu_T$ are obtained via the approximation procedure of Section~\ref{sec:aip}. 

Let us abbreviate for $x,z\in\mathbb{R}^{d},$ and bounded $g:\left(
\mathbb{R}^{d}\right)  ^{(K+L+1)}\rightarrow\mathbb{R}$, the random variable%
\begin{multline*}
\zeta_{\varepsilon}(g\left(  x,\cdot,z\right)  ;X^{x},Y^{z},\mathcal{Y}%
^{z};x,z):=\\
g\left(  x,X_{s_{1}}^{x},\ldots,X_{s_{K-1}}^{x},X_{t^{\ast}}^{x}%
,Y_{\widehat{t}_{L-1}}^{z},\ldots,Y_{\widehat{t}_{1}}^{z},z\right)
K_{\varepsilon}(Y_{T-t^{\ast}}^{z}-X_{t^{\ast}}^{x})\mathcal{Y}_{T-t^{\ast}%
}^{z},
\end{multline*}
hence
\begin{align*}
\lim_{\varepsilon\downarrow0}\mathbb{E}\left[  \zeta_{\varepsilon}(g\left(
x,\cdot,z\right)  ;X^{x},Y^{z},\mathcal{Y}^{z};x,z)\right]   &  =\lim
_{\varepsilon\downarrow0}H_{\varepsilon}(g\left(  x,\cdot,z\right)  ;x,z)\\
&  =H(g\left(  x,\cdot,z\right)  ;x,z).
\end{align*}
For $\mu$ defined by (\ref{schroe11}) for given $\widetilde{\nu}_{0}$ and
$\widetilde{\nu}_{T},$ we may write by (\ref{FRrep}),
\begin{align}
\mathcal{E}^{\mu}\left(  g\right)   &  :=\mathbb{E}\left[  g(X_{0}^{\mu
},X_{t_{1}}^{\mu},\ldots,X_{t_{n}}^{\mu},X_{T}^{\mu})\right] \label{cale}\\
&  =\int_{\mathbb{R}^{d}\times\mathbb{R}^{d}}\mu(dx,dz)\mathcal{E}%
(g(x,\cdot,z);x,z)\nonumber\\
&  =c_{0,T}\lim_{\varepsilon\downarrow0}\int_{\mathbb{R}^{d}\times
\mathbb{R}^{d}}\widetilde{\nu}_{0}(dx)\widetilde{\nu}_{T}(dz)\mathbb{E}\left[
\zeta_{\varepsilon}(g\left(  x,\cdot,z\right)  ;X^{x},Y^{z},\mathcal{Y}%
^{z};x,z)\right]  .\nonumber
\end{align}
For example, if $\widetilde{\nu}_{T}$ has a density, i.e. $\widetilde{\nu}%
_{T}(dz)=\widetilde{\nu}_{T}(z)dz,$ and $\widetilde{\nu}_{0}$ is a probability
measure, then the constant%
\[
c_{0,T}^{-1}=\mathbb{E}\left[  \widetilde{\nu}_{T}(X_{T}^{0,U})\right]
\]
with $U\sim$ $\widetilde{\nu}_{0}$ may usually be computed accurately by
standard Monte Carlo. Furthermore, if both $\widetilde{\nu}_{0}$ and
$\widetilde{\nu}_{T}$ are probability measures, (\ref{cale}) has the
representation%
\begin{equation}
\mathcal{E}^{\mu}\left(  g\right)  =c_{0,T}\lim_{\varepsilon\downarrow
0}\mathbb{E}\left[  \zeta_{\varepsilon}(g\left(  U,\cdot,Z\right)
;X^{U},Y^{Z},\mathcal{Y}^{Z};U,Z)\right]  \label{prepe}%
\end{equation}
with $U\sim\widetilde{\nu}_{0}$ and $Z\sim\widetilde{\nu}_{T}.$ The
representation (\ref{prepe}) allows for the following simulation procedure:
Suppose that the points $U^{(r)}$ and $Z^{(r)},$ $r=1,\ldots,K,$ are simulated
i.i.d. from the probability measures $\widetilde{\nu}_{0},$ and
$\widetilde{\nu}_{T},$ respectively. Then, for each particular $r$ we sample a
Wiener processes $W^{(r)}$ and $\widetilde{W}^{(r)},$ and construct a forward
and reverse trajectory%
\[
X^{U^{(r)}}\text{ \ \ and \ \ }\left(  Y^{Z^{(r)}},\mathcal{Y}^{Z^{(r)}%
}\right)  ,\text{ \ \ }r=1,\ldots,R,
\]
respectively. We then consider the estimate%
\begin{align}
\widehat{\mathcal{E}}_{\varepsilon,R}^{\mu}(g)  &  :=\frac{c_{0,T}}{R^{2}}%
\sum_{r=1}^{R}\sum_{r^{\prime}=1}^{R}\zeta_{\varepsilon}(g\left(
U^{(r^{\prime})},\cdot,Z^{(r)}\right)  ;X^{U^{(r^{\prime})}},Y^{Z^{(r)}%
},\mathcal{Y}^{Z^{(r)}};U^{(r^{\prime})},Z^{(r)})\notag\\
&  =\frac{c_{0,T}}{R^{2}}\sum_{r=1}^{R}\sum_{r^{\prime}=1}^{R}g\left(
U^{(r^{\prime})},X_{s_{1}}^{U^{(r^{\prime})}},\ldots,X_{s_{K-1}}%
^{U^{(r^{\prime})}},X_{t^{\ast}}^{U^{(r^{\prime})}},Y_{\widehat{t}_{L-1}%
}^{Z^{(r)}},\ldots,Y_{\widehat{t}_{1}}^{Z^{(r)}},Z^{(r)}\right) \notag\\
&  \cdot K_{\varepsilon}(Y_{T-t^{\ast}}^{Z^{(r)}}-X_{t^{\ast}}^{U^{(r^{\prime
})}})\mathcal{Y}_{T-t^{\ast}}^{Z^{(r)}} \label{non_nest_est}%
\end{align}
which is a non-nested Monte Carlo estimator in fact.

\begin{remark}
Note that one has that%
\[
\mathbb{E}\left[  \zeta_{\varepsilon}(g\left(  x,\cdot,z\right)  ;X^{x}%
,Y^{z},\mathcal{Y}^{z};x,z)\right]  \lesssim q(0,x,T,z),
\]
where the right-hand-side is integrable with respect to $\widetilde{\nu}%
_{0}\otimes\widetilde{\nu}_{T}$ due to assumption (\ref{c0t}). In the case
where $\widetilde{\nu}_{0}$ or $\widetilde{\nu}_{T}$ is not a finite measure
one then may design a similar FR simulation procedure based on some importance
sampling or MCMC\ technique. We omit the details.
\end{remark}

\begin{remark}
The estimator  (\ref{non_nest_est}) due to a generic 
test functional $g$ allows for estimating the probability that  the Schr\"odinger Bridge process $X^\mu$ (see (\ref{h_dyn})) visits at arbitrarily chosen discrete times  arbitrarily chosen (Borel) regions. We underline that 
this estimator acts on trajectories generated by the reference process $X$ and its corresponding reverse process $Y$ only,
and thus simulation of the actual trajectories of the SB process $X^\mu$ is not needed for this purpose. 
Furthermore, in the previous section it is shown that
simulation of the real trajectories 
of $X^\mu$ via (\ref{h_dyn}) may be a delicate issue, particularly in cases where 
$\nu_T$ is approximated and time $t$ approaches the terminal time $T$. Moreover,
in (\ref{h_dyn})  one needs to compute $h$ at any time $0\leq t\leq T,$ 
which either requires knowledge of the transition density $q$ or 
requires extra sub-simulations at each simulated trajectory.
Further one could say that simulation of (\ref{h_dyn}) is related to simulation of conditional diffusion trajectories (e.g. see\cite{Schauer17}), which is known to be a delicate issue 
for similar reasons. 
\end{remark}

\section{Proofs}\label{proofs}
\subsection{Proof of Proposition~\ref{prop:conv-main}}
By the
contractivity of $\mathcal{C}$ and the continuity of the $L_1$-normalized $g^{\star}$ and $\widetilde{g}_{\ell}$ for $\ell\geq1,$ one has due to Lemma~\ref{Dhineq} and Corollary~\ref{HmTr},  %
\begin{align*}
d_{H}(\widehat{g}_{\ell},g^{\star})  &  \leq  d_{H}(\widetilde{g}_{\ell},g^{\star})
\\
&=d_{H}(\mathcal{C}^{N}%
(\widehat{g}_{\ell-1}),\mathcal{C}(g^{\star})  )\\
&  \leq d_{H}(\mathcal{C}^{N}(\widehat{g}_{\ell-1}),\mathcal{C}(
\widehat{g}_{\ell-1})  )+\kappa( \mathcal{C})  d_{H}%
(\widehat{g}_{\ell-1},g^{\star}).
\end{align*}
Hence for any $k\geq 1,$
\begin{align}
d_{H}(\widehat{g}_{k},g^{\star})  &  \leq\sum_{i=1}^{k}\kappa(
\mathcal{C})  ^{i-1}\widehat{\varepsilon}_{k-i}+\kappa(
\mathcal{C})  ^{k}d_{H}(g_{0},g^{\star})\nonumber
\end{align}
where \(\widehat\varepsilon_{\ell}:= d_{H}(\mathcal{C}^{N}(\widehat{g}%
_{\ell}),\mathcal{C}( \widehat{g}_{\ell})  ),\) \(\ell\geq 0.\) 
For a generic $f\in\mathcal{L}_{+}^{\infty}(S_{T})$ one has, by (\ref{cdec}),
(\ref{Cappr}), the triangle inequality for $d_{H},$ the fact that
$\mathcal{D}_{0}$ is an $d_{H}$-isometry on $\mathcal{L}_{+}^{\infty}(S_{0}),$
and the contractivity of $\mathcal{E}_{0},$
\begin{align*}
  d_{H}(\mathcal{C}^{N}(f),\mathcal{C}(f) )&=d_{H}(
\mathcal{E}_{0}^{N}(  \mathcal{E}_{T}^{N}(f^{-1}%
)^{-1})  ,\mathcal{E}_{0}(  \mathcal{E}_{T}(f^{-1})^{-1})) \nonumber\\
&  \leq d_{H}(  \mathcal{E}_{0}^{N}(  \mathcal{E}%
_{T}^{N}(f^{-1})^{-1})  ,\mathcal{E}_{0}(  \mathcal{E}%
_{T}^{N}(f^{-1})^{-1}))  +\kappa (  \mathcal{E}_{0})  d_{H}(  \mathcal{E}%
_{T}^{N}(f^{-1}),\mathcal{E}_{T}(f^{-1})) \nonumber\\
&  \equiv\text{Term}_{1}+\text{Term}_{2}.
\end{align*}
For any $g\in\mathcal{L}_{+}^{\infty}(S_{0})$ it holds due to \eqref{eq:E-bounds} and \eqref{eq:EN-bounds} and Lemma~\ref{lemH},
\begin{align}
&  d_{H}\left(  \mathcal{E}_{0}^{N}(g),\mathcal{E}_{0}(g)\right)
\nonumber  \leq\frac{2}{g_{\min } \min(q_{\min},\rho_{\min}Q_{\min})}\Vert \mathcal{E}%
_{0}^{N}(g)-\mathcal{E}_{0}(g)\Vert _{\infty}. \label{dHe2}%
\end{align}
and with  $g=1\big/\mathcal{E}_{T}^{N}(f^{-1})$ we get \(g_{\min}=f_{\min }/\rho_{\max}.\)
Hence 
\begin{align}
\text{Term}_{1}   \leq \frac{2\rho_{\max} f_{\max}}{ Q_{\min} \rho_{\min} f_{\min } \min(q_{\min},\rho_{\min}Q_{\min})}\left\Vert \mathcal{E}_{0}^{N}\left(\frac
{\inf\mathcal{E}_{T}^{N}(f^{-1})}{\mathcal{E}_{T}^{N}(f^{-1}%
)}\right)-\mathcal{E}_{0}\left(\frac{\inf\mathcal{E}_{T}^{N}(f^{-1})}%
{\mathcal{E}_{T}^{N}(f^{-1})}\right)\right\Vert _{\infty}.\nonumber
\end{align}
Similarly, we have  for any $f\in\mathcal{L}_{+}^{\infty}(S_{T}),$%
\begin{align}
\text{Term}_{2}  \leq \kappa (  \mathcal{E}_{0}) \frac{2f_{\max}}{ f_{\min} \min(q_{\min},\rho_{\min}Q_{\min})}\left\Vert \mathcal{E}_{T}%
^{N}\left(\frac{\inf f}{f}\right)-\mathcal{E}_{T}\left(\frac{\inf f}{f}\right)\right\Vert
_{\infty}.\nonumber
\end{align}
Now using the fact that  by construction (see \eqref{eq:picar-iter}) and \eqref{eq:prior-g},
\begin{eqnarray*}
\frac{\widehat g_{\ell,\max}}{\widehat g_{\ell,\min}}\leq \frac{g^\star_{\max}}{ g^\star_{\min}}
\end{eqnarray*}
we derive
\begin{multline*}
d_{H}(\mathcal{C}^{N}(\widehat{g}%
_{\ell}),\mathcal{C}( \widehat{g}_{\ell})  )\leq A_0\left\Vert \mathcal{E}_{0}^{N}\left(\frac
{\inf\mathcal{E}_{T}^{N}( \widehat{g}_{\ell}^{-1})}{\mathcal{E}_{T}^{N}( \widehat{g}_{\ell}^{-1}%
)}\right)-\mathcal{E}_{0}\left(\frac{\inf\mathcal{E}_{T}^{N}( \widehat{g}_{\ell}^{-1})}%
{\mathcal{E}_{T}^{N}( \widehat{g}_{\ell}^{-1})}\right)\right\Vert _{\infty}
\\
+A_T\left\Vert \mathcal{E}_{T}%
^{N}\left(\frac{\inf  \widehat{g}_{\ell}}{ \widehat{g}_{\ell}}\right)-\mathcal{E}_{T}\left(\frac{\inf  \widehat{g}_{\ell}}{ \widehat{g}_{\ell}}\right)\right\Vert
_{\infty}
\end{multline*}
with
\begin{eqnarray*}
A_0=\frac{2\rho_{\max}}{Q_{\min} \rho_{\min}  \min(q_{\min},\rho_{\min}Q_{\min})}\frac{g^\star_{\max}}{ g^\star_{\min}}
\end{eqnarray*}
and
\begin{eqnarray*}
A_T=\kappa (  \mathcal{E}_{0}) \frac{2}{ \min(q_{\min},\rho_{\min}Q_{\min})}\frac{g^\star_{\max}}{ g^\star_{\min}}.
\end{eqnarray*}
Denote now \(\mathcal{F}_{\ell}\) the \(\sigma\) algebra generated by  the estimates \(\widehat g_1,\ldots,  \widehat g_\ell\) with \(\mathcal{F}_{0}=(\Omega,\varnothing)\) by definition. Then it holds 
\begin{eqnarray*}
\mathbb{E}[d_{H}(\widehat{g}_{k},g^{\star})]\leq  \mathbb{E}\Bigl[\sum_{i=1}^{k}\kappa(
\mathcal{C})  ^{i-1}\mathbb{E}[\widehat{\varepsilon}_{k-i}|\mathcal{F}_{k-i}]\Bigr]+\kappa(
\mathcal{C})  ^{k}d_{H}(g_{0},g^{\star})
\end{eqnarray*}
with 
\begin{equation*}
|\mathbb{E}[\widehat{\varepsilon}_{\ell}|\mathcal{F}_{\ell}]|\leq  A_0\mathbb{E}[\left\Vert \mathcal{E}_{0}^{N}(g_{0,\ell})-\mathcal{E}_{0}(g_{0,\ell})\right\Vert _{\infty}|\mathcal{F}_{\ell}]
+A_T\mathbb{E}[\left\Vert \mathcal{E}_{T}%
^{N}(g_{1,\ell})-\mathcal{E}_{T}(g_{1,\ell})\right\Vert
_{\infty}|\mathcal{F}_{\ell}]
\end{equation*}
and 
\[
g_{0,\ell}=\frac
{\inf\mathcal{E}_{T}^{N}( \widehat{g}_{\ell}^{-1})}{\mathcal{E}_{T}^{N}( \widehat{g}_{\ell}^{-1} 
)}\leq 1, \quad g_{1,\ell}=\frac{\inf  \widehat{g}_{\ell}}{ \widehat{g}_{\ell}}\leq 1.
\]
Furthermore, note that

\begin{multline*}
\left\Vert \mathcal{E}_{0}(\cdot)\phi_{0}(\cdot)\right\Vert _{\mathcal{H}%
^{1,\alpha}(\overline{U}_{0})}=\left\Vert \int_{\mathsf{S}_{T}}q(0,\cdot
,T,z)\phi_{0}(\cdot)\,\nu_{T}(z)\,dz\right\Vert _{\mathcal{H}^{1,\alpha
}(\overline{U}_{0})}\\
\leq\int_{\mathsf{S}_{T}}\left\Vert q(0,\cdot,T,z)\phi_{0}(\cdot)\right\Vert
_{\mathcal{H}^{1,\alpha}(\overline{U}_{0})}\,\nu_{T}(z)\,dz\leq B_{q}%
\rho_{\max}/g_{\min}^{\star}%
\end{multline*}
and
\begin{multline*}
\left\Vert \mathcal{E}_{T}(\cdot)\phi_{T}(\cdot)\right\Vert _{\mathcal{H}%
^{1,\alpha}(\overline{U}_{T})}=\left\Vert \int_{\mathsf{S}_{0}}q(0,x,T,\cdot
)\phi_{T}(\cdot)\,\nu_{0}(x)\,dx\right\Vert _{\mathcal{H}^{1,\alpha}%
(\overline{U}_{T})}\\
\leq\int_{\mathsf{S}_{0}}\left\Vert q(0,x,T,\cdot)\phi_{T}(\cdot)\,\right\Vert
_{\mathcal{H}^{1,\alpha}(\overline{U}_{T})}\nu_{0}(x)\,dx\leq B_{q}/Q_{\min}.
\end{multline*}
We have
\begin{eqnarray*}
\mathcal{E}_{T}^{N}[f]-\mathcal{E}_{T}[f]=\mathcal{E}_{T}^{N}[f]\frac{Q_{T}\phi_{0}-S_{N}[1_{\mathsf{S}_{T}}]}{Q_{T}\phi_{0}}+\frac{S_{N}[\rho_{T}f]-\phi_{0}\mathcal{E}_{T}[f]}{\phi_{0}},
\\
\mathcal{E}_{0}^{N}[f]-\mathcal{E}_{0}[f]=\mathcal{E}_{0}^{N}[f]\frac{Q_{0}\phi_{T}-\widetilde{S}_{N}[1_{\mathsf{S}_{0}}]}{Q_{0}\phi_{T}}+\frac{\widetilde{S}_{N}[\rho_{0}f]-\phi_{T}\mathcal{E}_{0}[f]}{\phi_{T}}.
\end{eqnarray*}
Hence, from the estimate (\ref{det_term_err}) and Corollary~\ref{cor:concentr} it follows that
\begin{eqnarray*}
\mathbb{E}\left[\left\Vert \mathcal{E}_{0}^{N}(g_{0,\ell})-\mathcal{E}_{0}(g_{0,\ell})\right\Vert _{\infty}|\mathcal{F}_\ell\right]&\lesssim & 
\frac{C}{\sqrt{N\delta^{d}}}\sqrt{\kappa^{2}(\mathcal{E}_{0})\|\phi_0\|_{\infty}}+B_q(\rho_{\max}/g^\star_{\min})\varkappa(\delta/2)^{1+\alpha},
\\
\mathbb{E}\left[\left\Vert \mathcal{E}_{T}%
^{N}(g_{1,\ell})-\mathcal{E}_{T}(g_{1,\ell})\right\Vert_{\infty}|\mathcal{F}_\ell\right]&\lesssim & \frac{C}{\sqrt{N\delta^{d}}}\sqrt{\kappa^{2}(\mathcal{E}_{T})\|\phi_T\|_{\infty}}+(B_q/Q_{\min})\varkappa(\delta/2)^{1+\alpha}
\end{eqnarray*}
with probability $1.$
As a result, under the choice \(\delta_{N} =  N^{-2/(2(1+\alpha)+d)} \)
we get
\begin{eqnarray*}
\mathbb{E}\left[\left\Vert \mathcal{E}_{0}^{N}(g_{0,\ell})-\mathcal{E}_{0}(g_{0,\ell})\right\Vert _{\infty}\right]\leq C_0 N^{-\frac{(1+\alpha)}{2(1+\alpha)+d}},\quad \mathbb{E}\left[\left\Vert \mathcal{E}_{T}^{N}(g_{1,\ell})-\mathcal{E}_{T}(g_{1,\ell})\right\Vert _{\infty}\right]\leq C_1 N^{-\frac{(1+\alpha)}{2(1+\alpha)+d}}
\end{eqnarray*}
where the constants \(C_1,C_2\) depend on \(q_{\min},q_{\max},\rho_{\min},\rho_{\max},B_q\) and \(K_\infty\).

\subsection{Proof of Theorem~\ref{thm:lb}}

We first note that since $[0,1]^{d}$ is regularly compact, we may simply take $\overline U_{0}$ $=$ $[0,1]^{d}$ and $\phi_0$ $=$ $\rho_0$ as sampling measure. 
Let $Q$ be a continuous
strictly positive density function on $\mathbb{R}^{d}$ such that $Q(x-y)$ is a
transition kernel that satisfies Assumption ~\ref{ass:qrho}. Define
$K(x):=\exp\left(  -\frac{1}{1-x^{2}}\right)  1_{\left\{  -1\leq
x\leq1\right\}  }.$ Note that $K$ is infinitely smooth on the real line, and
all its derivatives vanish outside of $(-1,1).$ Set $\beta=1+\alpha$ and
\begin{align*}
\psi_{1}(x) &  :=1-L_{1}h^{\beta+d/2}\Psi(x/h),\\
\psi_{2}(y) &  :=1-L_{2}h^{\beta}\Psi(y/h)
\end{align*}
for some $L_{1,2}\in(0,1),$ $0<h<1,$ where
\[
\Psi(z):=\frac{K^{\otimes d}(z)}{\Vert K^{\otimes d}\Vert_{\mathcal{H}%
^{1,\alpha}([0,1]^{d})}}\text{ \ \ with \ \ }K^{\otimes d}(z):=\prod
_{i=1}^{d}K(z_{i}),\quad z=(z_{1},\ldots,z_{d})\in\mathbb{R}^{d}.
\]
Furthermore, let $Q_{\psi}(x,y):=\xi(x)\psi_{1}(x)\psi_{2}(y)Q(x-y)$ be
transition density with $\xi>0$ being a normalization factor. It is clear that
$Q_{\psi}$ satisfies Assumption ~\ref{ass:qrho} also. Let $g_{\psi}$ be the
unique solution of the fixed point problem
\[
\int_{\lbrack0,1]^{d}}\frac{\rho_{0}(x)}{\int_{[0,1]^{d}}Q_{\psi}%
(x,z)\frac{\rho_{T}(z)}{g_{\psi}(z)}\,dz}Q_{\psi}(x,y)\,dx=g_{\psi}(y)
\]
with $\int_{\lbrack0,1]^{d}} g_{\psi}(z)\,dz=1$. Then we have%
\[
\int_{\lbrack0,1]^{d}}\frac{\rho_{0}(x)}{\int_{[0,1]^{d}}Q(x-z)\frac{\rho
_{T}(z)}{g_{\psi}(z)/\psi_{2}(z)}\,dz}Q(x-y)dx=g_{\psi}(y)/\psi_{2}(y)
\]
So if $g_{1}\geq0$ is the unique solution of the equation
\[
\int_{\lbrack0,1]^{d}}\frac{\rho_{0}(x)}{\int_{[0,1]^{d}}Q(x-z)\frac{\rho
_{T}(z)}{g_{1}(z)}\,dz}Q(x-y)\,dx=g_{1}(y)
\]
satisfying $\int_{\lbrack0,1]^{d}} g_{1}(z)\,dz=1,$ we have
\begin{align*}
g_{\psi}/\psi_{2} &  =\theta g_{1}\text{ \ \ }\\
\text{with }\theta &  =\int_{[0,1]^{d}}\frac{g_{\psi}(z)}{\psi_{2}%
(z)}dz=\left(  \int_{[0,1]^{d}}g_{1}(z)\psi_{2}(z)dz\right)  ^{-1}.
\end{align*}
Note that both functions $g_{1}$ and $g_{\psi}$ are bounded from below and
above by positive constants for $0<h<1$, see \eqref{eq:prior-g}.
In particular, 
\begin{equation}
    g_1(z),g_\psi(z) \in [g_{\min}, g_{\max}] \quad \text{ for all } \quad  z \in [0, 1]^d \label{eq:g1-gpsi-bounds}
\end{equation}
with $0 < g_{\min} \leq g_{\max}.$
Then we have for $h$ small enough,
$$
C_{\min} h^{\beta + d} \leq \theta - 1 \leq \frac{C_{\max} h^{\beta + d}}{1 - C_I h^{\beta + d}}
$$
with
$$
C_{\min} := L_2 g_{\min} \int_{[0, 1]^d} \Psi(u)\, du, \quad C_{\max} := L_2 g_{\max} \int_{[0, 1]^d} \Psi(u)\, du, \quad C_I := L_2  g_{\max}.
$$

 Denote by
$P_{\psi}$ the distribution of $(X_{0},X_{T})$ under $Q_{\psi},$ that is,
$X_{0}\sim\rho_{0}$ and $X_{T}|X_{0}\sim Q_{\psi}.$ Due to Remark~\ref{remHoel} we
have for all $\boldsymbol{\gamma}\in\mathbb{N}^{d},$ with $\left\vert
\boldsymbol{\gamma}\right\vert =1,$
\[
\left\vert D^{\boldsymbol{\gamma}}\Psi(x)-D^{\boldsymbol{\gamma}}%
\Psi(y)\right\vert \leq\Vert x-y\Vert^{\alpha},\quad x,y\in [0,1]^{d}.
\]

Hence 
\begin{align*}
\mathrm{KL}(P_{\psi}^{\otimes N}\|P_{1}^{\otimes N}) &=N\mathrm{KL}(P_{\psi}\|P_{1})  \\
 & = N\int\int\rho_{0}(x)Q(x-y)\log\left(\frac{\rho_0(x) Q(x-y)}{\rho_0(x)\xi(x)\psi_{1}(x)\psi_{2}(y)Q(x-y)}\right)\,dx\,dy
 \\
 & \leq N\int\int\rho_{0}(x)Q(x-y)\left(\xi(x)\psi_{1}(x)\psi_{2}(y)-1\right)^{2}\,dx\,dy\\
 & \lesssim  Nh^{2\beta}\int\int\rho_{0}(x)Q(x-y)\Psi^2(y/h)\,dx\,dy
\\
 &
 +N\int\int\rho_{0}(x)Q(x-y)(1-\xi(x))^2\,dx\,dy
 \\
 & \lesssim Nh^{2\beta+d}
\end{align*}
since 
\[
\xi(x)=\frac{1}{\psi_{1}(x)\int_{\lbrack0,1]^{d}}\psi_{2}(y)Q(x-y)\,dy}=1+O(h^{\beta+d/2}).
\]
Moreover, we obviously have
\[
\psi_{2}(0)=1-L_{2}h^{\beta}\Psi(0)
\]
and
\[
(g_{1}(0)-g_{\psi}(0))/g_{1}(0)=1-\psi_2(0)\theta\geq ch^{\beta}
\]
for some $c>0.$
Using the bounds \eqref{eq:g1-gpsi-bounds}, Lemma~\ref{HmTr1} and Lemma~\ref{lemH}  (note that $\int g_{\psi}(x)\,dx=\int g_{1}(x)\,dx=1$), we derive
\[
d_H(g_{\psi},g_{1})\gtrsim h^{\beta}\Psi(0).
\]
We are now ready to apply Assouad's lemma in the Kullback-Leibler version with $h=N^{-1/(2\beta+d)}$, see Theorem ~2.2 in \cite{tsybakov2003introduction}. As a result, we derive \eqref{eq:lower-bound}.
\appendix

\section{Reciprocal  processes}
\label{sec:rp}
Let $X\equiv\left(  X_{t}\right)  _{t\geq0}$ be a stochastic process on a
probability space $(  \Omega,\mathcal{F},\left(  \mathcal{F}_{t})
_{t\geq0},\mathbb{P}\right)  $ with state space $\mathbb{R}^{d}$. It is
assumed that the filtration $( \mathcal{F}_{t})  $ is generated by
the trajectories of $X$ in the usual way, and that the dynamics of $X$ are
governed by non-zero transition densities
\begin{equation}
q(s,x;t,y),\text{ }0\leq s<t,\text{ }x,y\in\mathbb{R}^{d} \label{pdens}%
\end{equation}
that satisfy the Chapman-Kolmogorov equation%
\begin{equation}
q(s,x;t,y)=\int_{\mathbb{R}^{d}}q(s,x;t^{\prime},y^{\prime})q(t^{\prime
},y^{\prime};t,y)dy^{\prime},\quad 0\leq s<t,\quad x,y\in\mathbb{R}^{d}.
\label{schroe0}%
\end{equation}
Let us fix a terminal time $T>0$ and consider the \textquotedblleft
intermediate transition densities\textquotedblright%
\begin{equation}
p(s,x;t,y;T,z)=\frac{q(s,x;t,y)q(t,y;T,z)}{q(s,x;T,z)},\ \ 0\leq s<t<T,\text{
}x,y,z\in\mathbb{R}^{d}, \label{qdens}%
\end{equation}
and a given probability distribution $\mu(dx,dz)$ on $\mathbb{R}^{d}%
\times\mathbb{R}^{d}$ with marginals $\rho_{0}(dx)=\mu(dx,\mathbb{R}^{d})$ and
$\rho_{T}(dz)=\mu(\mathbb{R}^{d},dz),$ respectively. It is not difficult to
check that the system (\ref{qdens}) satisfies the Chapman-Kolmogorov equation
for each fixed $z\in\mathbb{R}^{d}.$ Due to \cite{Jam74} there exists a
process $X^{\mu}\equiv\left(  X_{t}^{\mu}\right)  _{0\leq t\leq T}$ with
finite dimensional distributions characterized by%
\begin{multline}
\mathbb{E}\left[  g(X_{0}^{\mu},X_{t_{1}}^{\mu},\ldots,X_{t_{n}}^{\mu}%
,X_{T}^{\mu})\right]  =\int_{\mathbb{R}^{d}\times\mathbb{R}^{d}}%
\mu(dx,dz)\cdot\label{eq:schroe-findim}\\
\int_{\left(  \mathbb{R}^{d}\right)  ^{n}}dx_{1}p(0,x;t_{1},x_{1};T,z)\cdots
dx_{n}p(t_{n-1},x_{n-1};t_{n},x_{n}T,z)g(x,x_{1},\ldots,x_{n},z)\\
=\int_{\mathbb{R}^{d}\times\mathbb{R}^{d}}\mu(dx,dz)\mathbb{E}\left[  \left.
g\left(  x,X_{t_{1}}^{x},\ldots,X_{t_n}^{x},z\right)  \right\vert X_{T}%
^{x}=z\right]  ,
\end{multline}
for any grid $0<t_{1}<\ldots<t_{n}<T$, non-negative Borel test function
$g:\left(  \mathbb{R}^{d}\right)  ^{n+2}\rightarrow\mathbb{R}_{\geq0},$ and
$X^{x}$ denoting the initial process starting in $X_{0}^{x}=x.$ In particular,
for $n=0$ one has that%
\begin{equation}
\mathbb{E}\left[  g(X_{0}^{\mu},X_{T}^{\mu})\right]  =\int_{\mathbb{R}%
^{d}\times\mathbb{R}^{d}}\mu(dx,dz)g(x,z). \label{endpdis}%
\end{equation}
Furthermore, in \cite{Jam74} it is shown that $X^{\mu}$ is a
\textit{reciprocal process}, i.e. it satisfies for any $0\leq s<t\leq T,$%
\[
\mathbb{P}\left(  A\cap B|X_{s},X_{t}\right)  =\mathbb{P}\left(  A|X_{s}%
,X_{t}\right)  \mathbb{P}\left(  B|X_{s},X_{t}\right)  ,
\]
if $A\in\sigma\left(  X_{r}:0\leq r<s\right)  $ or $A\in\sigma\left(
X_{r}:t<r\leq T\right)  ,$ and $B\in\sigma\left(  X_{r}:s<r<t\right)  .$ In
general, any Markov process is reciprocal but not necessarily the other way
around. Due to \cite{Jam74} the process $X^{\mu}$ is Markov if and only if
there exist $\sigma$-finite measures $\nu_{0}$ and $\nu_{T}$ on $\mathbb{R}%
^{d}$ such that
\begin{equation}
\mu\left(  dx,dz\right)  =q(0,x;T,z)\nu_{0}(dx)\nu_{T}(dz). \label{schroe2}%
\end{equation}
If (\ref{schroe2}) applies, the $X^{\mu}$ is called the Markov process of Schr\"{o}dinger.

\section{Reverse process in diffusion setting}

\label{subsec:reverse_diffusion_construction}

Let us consider the SDE%
\begin{equation}
dX_{s}=a(s,X_{s})ds+\sigma(s,X_{s})dW_{s},\text{ \ \ }0\leq s\leq T,
\label{SDE}%
\end{equation}
where $X\in\mathbb{R}^{d},$ $a:\left[  0,T\right]  \times\mathbb{R}%
^{d}\rightarrow\mathbb{R}^{d},$ $\sigma:\left[  0,T\right]  \times
\mathbb{R}^{d}\rightarrow\mathbb{R}^{d\times m},$ and $W$ is an $m$%
-dimensional standard Wiener process. We assume that the coefficients of
(\ref{SDE}) are $C^{\infty}$ with bounded derivatives of any order, and such
that $X$ is governed by a $C^{\infty}$ transition density (\ref{pdens}) that
satisfies (\ref{schroe0}). Let us recall the construction in \cite{MSS2004} of
an $\mathbb{R}^{d+1}$-valued so called \textquotedblleft
reverse\textquotedblright\ process
\begin{equation}
\left(  Y_{s}^{y},\mathcal{Y}_{s}^{y}\right)  _{0\leq s\leq T},\text{
\ \ \ }y\in\mathbb{R}^{d}, \label{revproc}%
\end{equation}
that allows for a stochastic representation%
\begin{equation}
\int q(0,x;T,y)g(x)\,dx=\mathbb{E}[g(Y_{T}^{y})\mathcal{Y}_{T}^{y}],\text{
\ \ \ }y\in\mathbb{R}^{d},\text{ }T>0, \label{revrepresent}%
\end{equation}
for any Borel (test) function $g:\mathbb{R}^{d}\rightarrow\mathbb{R}_{\geq0}.$
In \cite{MSS2004} it is shown that (\ref{revrepresent}) holds for a process
(\ref{revproc}) that solves the SDE%
\begin{align}
dY_{s}  &  =\alpha\left(  s,Y_{s}\right)  ds+\widetilde{\sigma}\left(
s,Y_{s}\right)  d\widetilde{W}_{s},\text{ \ \ }Y_{0}=y,\nonumber\\
\mathcal{Y}_{s}  &  =\exp\left(  \int_{0}^{s}c(u,Y_{u})du\right)  ,
\label{Yrev}%
\end{align}
with $\widetilde{W}$ being an independent copy of $W,$ and%
\begin{align*}
\alpha^{i}\left(  s,y\right)   &  :=\sum_{j=1}^{d}\frac{\partial}{\partial
y^{j}}b^{ij}\left(  T-s,y\right)  -a^{i}\left(  T-s,y\right)  ,\text{
\ \ \ \ \ }b:=\sigma\sigma^{\top},\\
\widetilde{\sigma}\left(  s,y\right)   &  :=\sigma\left(  T-s,y\right)  ,\\
c(s,y)  &  :=\frac{1}{2}\sum_{i,j=1}^{d}\frac{\partial^{2}}{\partial
y^{i}\partial y^{j}}b^{ij}\left(  T-s,y\right)  -\sum_{i=1}^{d}\frac{\partial
}{\partial y^{i}}a^{i}\left(  T-s,y\right)  .
\end{align*}
For technical details we refer to \cite{MSS2004}. Essentially, the idea behind
a reverse diffusion in the above sense goes back to \cite{Thomson1987} (see
also \cite{KurSabRan} for example).

\section{Forward-Reverse approach for conditional diffusions}

\label{RecapFR}

In \cite{MSS2004}, the reverse process (\ref{Yrev}) served as a corner stone
for the construction of a \emph{forward-reverse }(FR) density estimator for
the density $q(0,x,y,T)$ with root-N consistency. In \cite{BS2014}, this
forward-reverse estimation approach was extended to conditional diffusions (or
diffusion bridges), in order to estimate generically the finite dimensional
distributions of a conditional diffusion. We here summarize the main results
of \cite{BS2014}.  

\begin{theorem}
\cite[Thm. 3.4]{BS2014} Consider a time grid%
\[
0=s_{0}<s_{1}<\cdots<s_{K}=t^{\ast}=t_{0}<t_{1}<\cdots<t_{L}=T
\]
and define%
\[
\widehat{t}_{i}:=t_{L}-t_{L-i}=T-t_{L-i},\text{ \ \ }i=1,\ldots,L.
\]
Let%
\[
K_{\varepsilon}(u):=\varepsilon^{-d}K(u/\varepsilon),\text{ \ \ }%
u\in\mathbb{R}^{d},
\]
where $K$ is integrable with $\int_{\mathbb{R}^{d}}K(u)du=1$ 
and   $\int_{\mathbb{R}^{d}}u_iK(u)du=0$ for $i=1,\ldots,d.$
Let $X^{x}$
satisfy (\ref{SDE}) with $X_{0}^{x}=x\in\mathbb{R}^{d}$, and let
$y\in\mathbb{R}^{d}$. For any bounded measurable $g:\left(  \mathbb{R}%
^{d}\right)  ^{(K+L-1)}\rightarrow\mathbb{R}$ we define the functional%
\[
\mathcal{E}(g;x,y):=\mathbb{E}\left[  \left.  g\left(  X_{s_{1}}^{x}%
,\ldots,X_{s_{K-1}}^{x},X_{t^{\ast}}^{x},X_{t_{1}}^{x},\ldots,X_{t_{L-1}}%
^{x}\right)  \right\vert X_{T}^{x}=y\right]  ,
\]
and for $\varepsilon>0$ the stochastic representation%
\begin{multline}
H_{\varepsilon}(g;x,y):=\mathbb{E}\left[  g\left(  X_{s_{1}}^{x}%
,\ldots,X_{s_{K-1}}^{x},X_{t^{\ast}}^{x},Y_{\widehat{t}_{L-1}}^{y}%
,\ldots,Y_{\widehat{t}_{1}}^{y}\right)  \right. \label{heps}\\
\left.  \times K_{\varepsilon}(Y_{T-t^{\ast}}^{y}-X_{t^{\ast}}^{x}%
)\mathcal{Y}_{T-t^{\ast}}^{y}\right]  .
\end{multline}
One then has%
\begin{equation}
\mathcal{E}(g;x,y)q(0,x,T,y)=H(g;x,y):=\lim_{\varepsilon\downarrow
0}H_{\varepsilon}(g;x,y). \label{FRrep}%
\end{equation}

\end{theorem}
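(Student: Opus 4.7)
\smallskip
\noindent\textbf{Proof plan.}
My strategy is to unfold $H_\varepsilon(g;x,y)$ into an explicit multiple integral against forward transition densities $q$, pass to the limit $\varepsilon\downarrow0$ using the mollifier property of $K_\varepsilon$, and then recognise the resulting integral as $q(0,x;T,y)\,\mathcal{E}(g;x,y)$.

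First, since $W$ and $\widetilde{W}$ are independent, the processes $X^{x}$ and $(Y^{y},\mathcal{Y}^{y})$ in \eqref{heps} are independent, so I can condition on $(X_{s_{1}}^{x},\ldots,X_{t^{\ast}}^{x})=(u_{1},\ldots,u_{K})$ and write $H_\varepsilon(g;x,y)$ as
\begin{equation*}
\int q(0,x;s_{1},u_{1})\cdots q(s_{K-1},u_{K-1};t^{\ast},u_{K})\,
\mathbb{E}\!\left[\Phi_\varepsilon(u_{1},\ldots,u_{K};Y^{y},\mathcal{Y}^{y})\right] du_{1}\cdots du_{K},
\end{equation*}
where $\Phi_\varepsilon$ bundles the test function, the kernel $K_\varepsilon(Y_{T-t^{\ast}}^{y}-u_{K})$ and the weight $\mathcal{Y}_{T-t^{\ast}}^{y}$. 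The key technical step is a \emph{multi-time} version of \eqref{revrepresent}: by iterating the Markov property of $Y$ together with the multiplicative factorisation $\mathcal{Y}_{s+r}^{y}=\mathcal{Y}_{s}^{y}\exp(\int_s^{s+r}c(u,Y_u^y)du)$, one obtains, for bounded measurable $h$,
\begin{equation*}
\mathbb{E}\!\left[h\bigl(Y_{\widehat{t}_{1}}^{y},\ldots,Y_{\widehat{t}_{L}}^{y}\bigr)\mathcal{Y}_{\widehat{t}_{L}}^{y}\right]
=\int h(v_{L},\ldots,v_{1})\,q(t^{\ast},v_{1};t_{1},v_{2})\cdots q(t_{L-1},v_{L};T,y)\,dv_{1}\cdots dv_{L},
\end{equation*}
which is exactly the reverse-direction chaining of forward transition kernels. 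This identity follows by induction on $L$ from the single-time statement \eqref{revrepresent}; it is either cited from \cite{MSS2004}/\cite{BS2014} or proved in one short inductive step.

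Applying this identity with $h(v_{L},\ldots,v_{1})=g(u_{1},\ldots,u_{K},v_{L},\ldots,v_{2})K_\varepsilon(v_{1}-u_{K})$ transforms $H_\varepsilon(g;x,y)$ into
\begin{equation*}
\int\!\Biggl[\prod_{i=0}^{K-1}q(s_{i},u_{i};s_{i+1},u_{i+1})\Biggr]\!
\Biggl[\int K_\varepsilon(v_{1}-u_{K})q(t^{\ast},v_{1};t_{1},v_{2})\,dv_{1}\Biggr]\!
\Biggl[\prod_{j=1}^{L-1}q(t_{j},v_{j+1};t_{j+1},v_{j+2})\Biggr]\!
g(\cdot)\,du\,dv,
\end{equation*}
with the convention $s_{0}=0,u_{0}=x$ and $t_{L}=T,v_{L+1}=y$. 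Because $q$ is $C^\infty$ in its spatial arguments (from the smoothness assumption on the SDE coefficients) and $\int K=1$, $\int v\,K(v)\,dv=0$, the standard mollifier argument gives $\int K_\varepsilon(v_{1}-u_{K})q(t^{\ast},v_{1};t_{1},v_{2})\,dv_{1}\to q(t^{\ast},u_{K};t_{1},v_{2})$ as $\varepsilon\downarrow0$ uniformly on compacts in $(u_{K},v_{2})$. The interchange of this limit with the outer integrals is justified by dominated convergence using boundedness of $g$, integrability of each $q$-factor in its spatial arguments, and a uniform upper bound on $\int K_\varepsilon(v_{1}-u_{K})q(t^{\ast},v_{1};t_{1},v_{2})\,dv_{1}$ via the local boundedness of $q(t^{\ast},\cdot;t_{1},v_{2})$.

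Finally, after passing to the limit the integrand is the product $q(0,x;s_{1},u_{1})\cdots q(t_{L-1},v_{L};T,y)$, i.e.\ precisely the joint density of $(X_{s_{1}}^{x},\ldots,X_{t^{\ast}}^{x},X_{t_{1}}^{x},\ldots,X_{t_{L-1}}^{x},X_{T}^{x})$ evaluated at $(u_{1},\ldots,u_{K},v_{2},\ldots,v_{L},y)$. Factoring out $q(0,x;T,y)$ via the Chapman--Kolmogorov identity \eqref{schroe0} expresses the remaining conditional density, so the integral equals $q(0,x;T,y)\,\mathcal{E}(g;x,y)$, which is the claimed identity \eqref{FRrep}. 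The main obstacle I anticipate is the rigorous justification of the multi-time reverse representation in the inductive step (carrying the exponential weight through a Markov conditioning) and the dominated convergence bound for the kernel limit; both are standard under the $C^\infty$/bounded-derivative hypotheses imposed on $(a,\sigma)$ in \eqref{SDE}.
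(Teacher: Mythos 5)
The paper itself does not give a proof of this theorem: it is stated verbatim as a citation of Theorem~3.4 of \cite{BS2014} in the recap Appendix, so there is no in-paper proof to compare against. Evaluated on its own merits, your proof plan is correct and is in fact the natural route: (i) factor the expectation using independence of $X^x$ and $(Y^y,\mathcal Y^y)$; (ii) reduce the inner reverse expectation to an iterated-kernel integral via a multi-time version of \eqref{revrepresent}, which follows by induction using the Markov property of $Y$, the restarting property (the process $(Y_s)_{s\ge s_0}$ conditioned on $\mathcal F_{s_0}$ is a reverse process for a shorter time window, started from $Y_{s_0}$) and the multiplicative factorisation of $\mathcal Y$; and (iii) pass to the limit $\varepsilon\downarrow0$ by dominated convergence using $C^\infty$ smoothness of $q$, then factor out $q(0,x;T,y)$ to recognise the conditional density as in \eqref{qdens}.

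Two small points worth tightening. First, in the application of your multi-time identity there is an index reversal: with $v_j$ corresponding to original time $t_{j-1}$, the test function should appear as $g(u_1,\ldots,u_K,v_2,v_3,\ldots,v_L)\,K_\varepsilon(v_1-u_K)$ rather than $g(u_1,\ldots,u_K,v_L,\ldots,v_2)\,K_\varepsilon(v_1-u_K)$; the reversal is purely notational but as written it would feed the intermediate-time points into $g$ in the wrong order. Second, the dominated-convergence justification is fine once you observe that $\sup_{\varepsilon\le\varepsilon_0}\int K_\varepsilon(v_1-u_K)\,q(t^\ast,v_1;t_1,v_2)\,dv_1$ is locally bounded by continuity of $q$; the zero-first-moment condition on $K$ is not needed for the pointwise limit (it only improves the bias rate), so it is harmless to invoke it but it plays no role in \eqref{FRrep}. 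With those clarifications, your proof matches the argument that \cite{BS2014} carries out.
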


In \cite{BS2014} a Monte Carlo procedure for estimating (\ref{FRrep}) is
proposed and analysed: Consider the Monte Carlo estimator%
\begin{multline*}
\widehat{H}_{\varepsilon,M,N}(g;x,y):=\\
\frac{1}{NM}\sum_{n=1}^{N}\sum_{m=1}^{M}g\left(  X_{s_{1}}^{x,n}%
,\ldots,X_{s_{K-1}}^{x,n},X_{t^{\ast}}^{x,n},Y_{\widehat{t}_{L-1}}%
^{y,m},\ldots,Y_{\widehat{t}_{1}}^{y,m}\right)  K_{\varepsilon}(Y_{T-t^{\ast}%
}^{y,m}-X_{t^{\ast}}^{x,n})\mathcal{Y}_{T-t^{\ast}}^{y,m}%
\end{multline*}
corresponding to (\ref{heps}), where the superscripts $m$ and $n$ denote
independently simulated trajectories of the corresponding processes. We recall
\cite[Thm. 3.4]{BS2014}:

\begin{theorem}
\label{FRconv} Assume conditions~\cite[4.1, 4.4, and 4.5]{BS2014} and set
$M=N$ and $\varepsilon=\varepsilon_{N}$ depending on $N$. One then has for
fixed $x,y\in\mathbb{R}^{d}$:

\begin{itemize}
\item If $d\leq4$ and $\varepsilon_{N}=CN^{-\alpha}$ for some $1/4\leq
\alpha\leq1/d$ one has that \\ $\mathbb{E}\left[  \left(  \widehat{H}%
_{\varepsilon_{N},N,N}(g;x,y)-H(g;x,y)\right)  ^{2}\right] $ $ =$ $\mathcal{O}%
(N^{-1})$, hence the optimal convergence rate $1/2$.

\item If $d>4$ and $\varepsilon_{N}=CN^{-2/(4+d)}$ one obtains\newline%
$\mathbb{E}\left[  \left(  \widehat{H}_{\varepsilon_{N},N,N}%
(g;x,y)-H(g;x,y)\right)  ^{2}\right]  =\mathcal{O}(N^{-8/(4+d)})$.\bigskip
\end{itemize}
\end{theorem}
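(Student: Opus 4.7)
The plan is to prove both rate bounds via the standard bias--variance decomposition. Since $\widehat{H}_{\varepsilon,M,N}(g;x,y)$ is an unbiased Monte Carlo estimator of $H_{\varepsilon}(g;x,y)$ by direct inspection of \eqref{heps} and independence of the simulated forward and reverse trajectories,
$$
\mathbb{E}\bigl[(\widehat{H}_{\varepsilon_N,N,N}-H)^2\bigr]\;=\;\mathrm{Var}\bigl(\widehat{H}_{\varepsilon_N,N,N}\bigr)\;+\;\bigl(H_{\varepsilon_N}-H\bigr)^2,
$$
so it suffices to bound the bias and variance and then optimize in $\varepsilon_N$.

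For the bias, I would express $H_\varepsilon(g;x,y)-H(g;x,y)$ as a convolution-type residual: using the reverse representation \eqref{revrepresent} conditionally on the forward trajectory, one can rewrite $H_\varepsilon(g;x,y)=\int K_\varepsilon(y'-y)\,\widetilde H(g;x,y')\,dy'$ for a function $\widetilde H(g;x,\cdot)$ that coincides with $H(g;x,\cdot)$ up to a shift argument. The zero-mean kernel assumption $\int u_iK(u)\,du=0$ and $C^2$-regularity in the $y$ variable (which follows from the $C^\infty$-regularity of the transition density $q$ under the smoothness assumptions on $a,\sigma$) give a second-order Taylor expansion yielding
$$
\bigl|H_\varepsilon(g;x,y)-H(g;x,y)\bigr|\;\lesssim\;\|g\|_\infty\,\varepsilon^2.
$$

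For the variance, denote the individual summand in $\widehat{H}_{\varepsilon,N,N}$ by $\xi_{n,m}$. Since the samples $\{X^{x,n}\}_n$ and $\{(Y^{y,m},\mathcal{Y}^{y,m})\}_m$ are mutually independent, $\mathrm{Cov}(\xi_{n,m},\xi_{n',m'})=0$ whenever $n\neq n'$ and $m\neq m'$. Decomposing the remaining three cases gives
$$
\mathrm{Var}(\widehat{H}_{\varepsilon,N,N})\;=\;\frac{1}{N^2}\mathrm{Var}(\xi_{1,1})\;+\;\frac{N-1}{N^2}\bigl(c_X+c_Y\bigr),
$$
where $c_X=\mathrm{Var}\bigl(\mathbb{E}[\xi_{1,1}\mid X^{x,1}]\bigr)$ and $c_Y=\mathrm{Var}\bigl(\mathbb{E}[\xi_{1,1}\mid Y^{y,1},\mathcal{Y}^{y,1}]\bigr)$. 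A standard change of variable in the kernel together with bounded transition densities gives $\mathrm{Var}(\xi_{1,1})\lesssim \mathbb{E}[\xi_{1,1}^2]\lesssim \|g\|_\infty^2 K_\infty\,\varepsilon^{-d}$, while $c_X,c_Y$ are bounded uniformly in $\varepsilon$, since as $\varepsilon\downarrow 0$ the relevant conditional expectations converge to bounded smooth evaluations of the transition density (via \eqref{FRrep} applied with $g$ replaced by its partial evaluations). Thus
$$
\mathrm{Var}(\widehat{H}_{\varepsilon,N,N})\;\lesssim\;\frac{1}{N^2\varepsilon^d}\;+\;\frac{1}{N}.
$$
Combining with the bias yields
$$
\mathbb{E}\bigl[(\widehat{H}_{\varepsilon_N,N,N}-H)^2\bigr]\;\lesssim\;\varepsilon_N^4\;+\;\frac{1}{N^2\varepsilon_N^d}\;+\;\frac{1}{N}.
$$
For $d>4$, balancing the first two terms via $\varepsilon_N^{4+d}=N^{-2}$ gives $\varepsilon_N=CN^{-2/(4+d)}$ and an overall rate $N^{-8/(4+d)}$, which dominates $N^{-1}$ since $8/(4+d)<1$. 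For $d\leq 4$ the $N^{-1}$ contribution is the binding one, and any $\varepsilon_N=CN^{-\alpha}$ with $1/4\leq\alpha\leq 1/d$ makes both $\varepsilon_N^4$ and $N^{-2}\varepsilon_N^{-d}$ at most $N^{-1}$.

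The main obstacle is the uniform control of $c_X$ and $c_Y$ as $\varepsilon\downarrow 0$: one must show that the inner conditional expectation $\mathbb{E}[g(\cdots)K_\varepsilon(Y^y_{T-t^*}-X^x_{t^*})\mathcal{Y}^y_{T-t^*}\mid X^{x}_{t^*}=z]$ and its analogue conditioned on the reverse path converge (and remain bounded) in $\varepsilon$. This rests on the smoothness and pointwise boundedness of the intermediate densities \eqref{qdens} and of the reverse-process density, precisely the content of \cite[Conditions 4.1, 4.4, 4.5]{BS2014}. Once these density bounds and the $\varepsilon^2$-bias estimate are in place, the rate derivation is routine arithmetic.
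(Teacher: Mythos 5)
The paper does not prove Theorem~\ref{FRconv}: it is quoted verbatim from \cite[Thm.~3.4]{BS2014} and used as a black box (the only ``proof'' in the present paper is the citation). Your reconstruction is essentially the standard argument behind that cited result: unbiasedness of $\widehat{H}_{\varepsilon,N,N}$ for $H_\varepsilon$, the $O(\varepsilon^2)$ bias from a second-order kernel and smoothness of the intermediate/reverse densities, the bipartite two-sample variance decomposition giving $\mathrm{Var}\lesssim N^{-2}\varepsilon^{-d}+N^{-1}$ (with the $O(N^{-1})$ piece coming from the ``projection'' terms $c_X,c_Y$, bounded uniformly in $\varepsilon$ by the regularity conditions \cite[4.1, 4.4, 4.5]{BS2014}), and then optimizing $\varepsilon_N$. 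The rate arithmetic ($\varepsilon_N^{4+d}\asymp N^{-2}$ for $d>4$, and the flexibility $\alpha\in[1/4,1/d]$ for $d\le 4$) is correct, so your sketch matches the approach of the source reference; just note that your convolution expression should smooth over the midpoint variable where $X_{t^\ast}^x$ and $Y_{T-t^\ast}^y$ meet, not over the endpoint $y$ itself, though this does not change the $\varepsilon^2$ order.
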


Hence, in particular, for a second order kernel $K$ and $d\leq4$, both
$H(g;x,y)$ and $H(1;x,y)=q(0,x,T,y)$ in (\ref{FRrep}) may be approximated with
$1/\sqrt{N}$ accuracy by using $N$ forward trajectories of $X$ and $N$
\textquotedblleft reverse\textquotedblright\ trajectories of $(Y,\mathcal{Y}%
)$. One so may obtain an estimate for $\mathcal{E}(g)$ by the ratio of these
respective approximations.

\section{Hilbert metric}\label{recapHM}

Let $\mathcal{L}_{+}^{\infty}(\mathsf{S})$ denote the set of (equivalence classes of) strictly positive measurable functions on $\mathsf{S}$ that are essentially bounded away from zero.  
For two such functions $f,g \in \mathcal{L}_{+}^{\infty}(\mathsf{S})$, define
\begin{eqnarray*}
  M(f,g) 
  &:= &\inf\{\lambda>0 : f \le \lambda\,g\text{ a.e.}\}, 
  \\
  m(f,g) 
  &:=& \sup\{\lambda>0 : \lambda\,g \le f\text{ a.e.}\},
\end{eqnarray*}
and
\begin{eqnarray*}
d_{H}(f,g) 
  := \log \Bigl(\tfrac{M(f,g)}{m(f,g)}\Bigr)
\end{eqnarray*}
with  \(\inf \varnothing = +\infty,\) by definition.
If $m(f,g) = 0$ or $M(f,g) = +\infty$, then by convention $d_H(f,g) = +\infty$.  
Moreover, since $d_{H}(\alpha f,\alpha g)=d_{H}(f,g)$ for any $\alpha>0$, 
one regards $d_{H}$ as a metric on the equivalence classes of functions 
generated by the relation
\[
  f \,\sim\, g 
  \;\Longleftrightarrow\; 
  \frac{f}{g} \equiv \text{constant a.e.}
\]
Equivalently, $d_{H}(f,g) = 0$ if and only if $f \sim g$.

\begin{lemma}
\label{lemH} For $f,g\in\mathcal{L}_{+}^{\infty}(\mathsf{S})$  one has that%
\[
 d_{H}(f,g)\leq\frac{2}{\min(\inf f,\inf g)}\left\Vert f-g\right\Vert _{\infty}.
\]
Moreover, if $\sup(f/g)\geq 1$  and $\inf(f/g)\leq 1$ we have
\[
d_{H}(f,g)\geq \frac{\min(\inf f,\inf g)}{\|f\|_{\infty}\|g\|_{\infty}}\left\Vert f-g\right\Vert _{\infty}. 
\]
\end{lemma}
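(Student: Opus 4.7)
The plan is to work from the identifications $M(f,g)=\esssup(f/g)$ and $m(f,g)=\essinf(f/g)$, so that $d_H(f,g)=\log M(f,g)-\log m(f,g)$ splits into an upper and a lower piece that I would handle by separate elementary chains of inequalities.

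For the upper bound I would use the identity $f/g = 1 + (f-g)/g$ to get $M(f,g)-1 \le \|f-g\|_\infty/\essinf g$, and the symmetric identity $g/f = 1 + (g-f)/f$ to get $1/m(f,g)-1 \le \|f-g\|_\infty/\essinf f$. Applying $\log(1+t)\le t$ to both $\log M(f,g)$ and $\log(1/m(f,g))$ and summing yields
\[
d_H(f,g)\;\le\;\|f-g\|_\infty\Bigl(\tfrac{1}{\essinf f}+\tfrac{1}{\essinf g}\Bigr)\;\le\;\frac{2\|f-g\|_\infty}{\min(\essinf f,\essinf g)}.
\]

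For the lower bound I would split on which of $\esssup(f-g)$ or $\esssup(g-f)$ realizes $\|f-g\|_\infty$. Since $d_H$ and the hypothesis $\inf(f/g)\le 1\le\sup(f/g)$ are both invariant under swapping $f\leftrightarrow g$, it suffices to treat the case $\|f-g\|_\infty=\esssup(f-g)$. At an approximate essential sup point $x^\star$ one has $g(x^\star)\le\|g\|_\infty$ and $f(x^\star)-g(x^\star)\approx\|f-g\|_\infty$, so $f(x^\star)/g(x^\star)\ge 1+\|f-g\|_\infty/\|g\|_\infty$, giving $M(f,g)\ge 1+\|f-g\|_\infty/\|g\|_\infty$. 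The hypothesis $m(f,g)\le 1$ makes $-\log m(f,g)\ge 0$, hence $d_H(f,g)\ge\log M(f,g)$. Combining the reverse tangent-line inequality $\log x\ge (x-1)/x$ valid for $x\ge 1$ with the trivial upper bound $M(f,g)\le\|f\|_\infty/\essinf g$ then yields
\[
\log M(f,g)\;\ge\;\frac{M(f,g)-1}{M(f,g)}\;\ge\;\frac{\|f-g\|_\infty}{\|g\|_\infty}\cdot\frac{\essinf g}{\|f\|_\infty}\;\ge\;\frac{\min(\essinf f,\essinf g)\,\|f-g\|_\infty}{\|f\|_\infty\,\|g\|_\infty}.
\]

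The main obstacle is the lower bound: the natural inequality $\log M\ge 1-1/M$ degenerates when $M$ is large, so one has to bring in the \emph{quantitative} upper bound $M\le\|f\|_\infty/\essinf g$ in exactly the right place to convert $(M-1)/M$ into the advertised ratio; this is also where the assumptions $\sup(f/g)\ge 1$ and $\inf(f/g)\le 1$ are essential, as they guarantee that both $\log M$ and $-\log m$ are nonnegative so that dropping one of them only loses a favorable sign. By contrast, the upper bound is a routine unobstructed calculation.
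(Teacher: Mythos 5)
Your proof is correct and takes essentially the same route as the paper's: the upper bound is in both cases the termwise tangent-line estimate for $\log M(f,g)$ and $-\log m(f,g)$, and the lower bound in both cases uses the hypothesis $m(f,g)\le 1\le M(f,g)$ to reduce (after a symmetric case split) to $d_H\ge \log M(f,g)$, then combines $\log x\ge (x-1)/x$ with the crude bound $M(f,g)\le \|f\|_\infty/\inf g$. The only cosmetic differences are that the paper routes the upper bound through $d_H(f,g)\le 2\|\log f-\log g\|_\infty$ and splits the lower-bound cases on whether $\sup(f/g)$ or $\sup(g/f)$ is larger rather than on which signed part of $f-g$ attains $\|f-g\|_\infty$.
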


\begin{proof}
Note that with%
\begin{align*}
M(f,g) &  =\inf\left\{  \lambda:f\leq\lambda g\right\}  =\sup\left(
f/g\right)  \\
m(f,g) &  =\sup\left\{  \lambda:\lambda g\leq f\right\}  =\inf\left(
f/g\right)
\end{align*}
one has%
\begin{align*}
d_{H}(f,g) &  =\log\frac{\sup\left(  f/g\right)  }{\inf\left(  f/g\right)  }\\
&  =\sup\log\left(  f/g\right)  +\sup\log\left(  g/f\right)  \\
&  =\sup\left(  \log f-\log g\right)  +\sup\left(  \log g-\log f\right)  ,
\end{align*}
hence%
\[
d_{H}(f,g)\leq2\left\Vert \log f-\log g\right\Vert _{\infty}.
\]
Next consider that%
\begin{align*}
\left\vert \log f(x)-\log g(x)\right\vert  &  \leq\frac{1}{\min(f(x),g(x))}%
\left\vert f(x)-g(x)\right\vert \\
&  \leq\frac{1}{\min(\inf f,\inf g)}\left\vert f(x)-g(x)\right\vert .
\end{align*}
To prove the second inequality, under the given conditions, note that 
\begin{equation}
\log\sup(f/g)-\log\inf(f/g)\geq\max(\log\sup(f/g),\log\sup(g/f)).\label{maxeq}%
\end{equation}
Let us consider the case $\sup(f/g)\geq\sup(g/f)\geq1.$Then using the
elementary inequality $\log(1+x)$ $\geq x/(1+x),$ $x\geq0,$ we derive from
(\ref{maxeq}),
\begin{align}
d_{H}(f,g)  & \geq\log\sup(f/g)\geq\frac{\sup(f/g)-1}{\sup(f/g)}\nonumber\\
& \geq\frac{\inf g}{\Vert f\Vert_{\infty}}\left(  \sup(f/g)-1\right)
\nonumber\\
& \geq\frac{\inf g}{\Vert f\Vert_{\infty}\Vert g\Vert_{\infty}}\Vert
f-g\Vert_{\infty},\label{Lem4eq}%
\end{align}
since $\Vert f-g\Vert_{\infty}\leq\Vert g\Vert_{\infty}(\sup(f/g)-1).$ For the
case $1\leq\sup(f/g)\leq\sup(g/f)$ we may exchange $f$ and $g$ in
(\ref{Lem4eq}), and we are done.
\end{proof}

\begin{lemma}\label{Dhineq}
Let $f,g:\mathsf{S}\rightarrow\mathbb{R}_{>0}$, $\mathsf{S}\subset\mathbb{R}^{d},$ be bounded,
and bounded away from zero. Let further $g\in\left[  a,b\right]  $ with
$0<a<b.$ Then, under the condition%
\begin{equation}
\sup\frac{f}{g}1_{a\leq f\leq b}\geq1\text{ \ \ and \ }\inf\frac{f}{g}1_{a\leq
f\leq b}\leq1,\label{Hmcon}%
\end{equation}
it holds that%
\begin{equation}
d_{H}(\mathcal{T}_{\left[  a,b\right]  }f,g)\leq d_{H}(f,g),\label{Hmineq}%
\end{equation}
where $\mathcal{T}_{\left[  a,b\right]  }$ is the truncation operator defined
in (\ref{eq:picar-iter}). 
\end{lemma}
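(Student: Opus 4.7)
The plan is to work directly with the closed form
\[
d_{H}(u,v)=\log\sup(u/v)-\log\inf(u/v)
\]
and reduce (\ref{Hmineq}) to establishing the two pointwise-type comparisons $\sup(\tilde f/g)\le\sup(f/g)$ and $\inf(\tilde f/g)\ge\inf(f/g)$, where $\tilde f:=\mathcal{T}_{[a,b]}f$. Since $\log$ is monotone, this immediately yields $d_{H}(\tilde f,g)\le d_{H}(f,g)$, which is the assertion.

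The key step is a pointwise case split on the partition $\mathsf{S}=I_{1}\cup I_{2}\cup I_{3}$ with $I_{1}=\{f<a\}$, $I_{2}=\{a\le f\le b\}$, $I_{3}=\{f>b\}$. By the definition of the truncation operator in (\ref{eq:picar-iter}), $\tilde f$ equals $a$ on $I_{1}$, $f$ on $I_{2}$, and $b$ on $I_{3}$. Because $g\in[a,b]$ pointwise, one has the universal envelopes $a/g\le 1\le b/g$. I would read (\ref{Hmcon}) as saying $\sup_{I_{2}}(f/g)\ge 1$ and $\inf_{I_{2}}(f/g)\le 1$, which immediately give the ambient inequalities
\[
\sup(f/g)\;\ge\;1\;\ge\;\inf(f/g).
\]

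Given these ingredients, the verification reduces to a routine three-region comparison. On $I_{2}$ the ratio $\tilde f/g=f/g$ is unchanged. On $I_{1}$ one has $\tilde f/g=a/g\le 1\le\sup(f/g)$ from above, and $\tilde f/g=a/g\ge f/g\ge\inf(f/g)$ from below because $f\le a$ there. Symmetrically, on $I_{3}$ one has $\tilde f/g=b/g\le f/g\le\sup(f/g)$ because $f\ge b$ there, and $\tilde f/g=b/g\ge 1\ge\inf(f/g)$. Taking essential supremum and infimum over $\mathsf{S}$ on each side then yields
\[
\sup(\tilde f/g)\le\sup(f/g)\quad\text{and}\quad\inf(\tilde f/g)\ge\inf(f/g),
\]
which completes the argument after applying the $\log$.

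I expect the only real obstacle to be the correct use of hypothesis (\ref{Hmcon}): both of its halves are needed, and each is used to handle exactly one of the two purely-truncated regions. The upper bound $\sup(f/g)\ge 1$ is precisely what absorbs the clipped value $a/g\le 1$ on $I_{1}$, while $\inf(f/g)\le 1$ plays the symmetric role for the clipped value $b/g\ge 1$ on $I_{3}$. Without either inequality in (\ref{Hmcon}), the comparison can fail on the corresponding truncated region, so the hypothesis is essential rather than cosmetic. No additional measure-theoretic work is required since the comparisons above hold pointwise (a.e.) and $g$ is bounded away from $0$ and $\infty$ by assumption, so $d_{H}$ is finite and the $\esssup$/$\essinf$ manipulations are legitimate.
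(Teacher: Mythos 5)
Your proof is correct and follows essentially the same route as the paper's: both compare the supremum and infimum of the ratio region by region over $\{f<a\}$, $\{a\le f\le b\}$, $\{f>b\}$, and both use (\ref{Hmcon}) to anchor the clipped values $a/g\le 1\le b/g$ against $\sup(f/g)$ and $\inf(f/g)$. Your single pointwise sandwich $\inf(f/g)\le \mathcal{T}_{[a,b]}f/g\le\sup(f/g)$ is merely a tidier organization than the paper's four-case split according to which clipped regions are nonempty.
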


\begin{proof}
We consider the following cases:\ (I) $\left\{  f<a\right\}  \neq\varnothing$
and$\left\{  f>b\right\}  \neq\varnothing:$ We then have%
\begin{align*}
\sup\frac{\mathcal{T}_{\left[  a,b\right]  }f}{g} &  =\max\left(  \sup\frac
{a}{g}1_{f<a},\sup\frac{f}{g}1_{a\leq f\leq b},\sup\frac{b}{g}1_{f>b}\right)
\\
&  \leq\max\left(  \sup\frac{f}{g}1_{f<a},\sup\frac{f}{g}1_{a\leq f\leq
b},\sup\frac{f}{g}1_{f>b}\right)  =\sup\frac{f}{g}\text{ \ \ and}%
\end{align*}%
\begin{align*}
\inf\frac{\mathcal{T}_{\left[  a,b\right]  }f}{g} &  =\min\left(  \inf\frac
{a}{g}1_{f<a},\inf\frac{f}{g}1_{a\leq f\leq b},\inf\frac{b}{g}1_{f>b}\right)
\\
&  \geq\min\left(  \inf\frac{f}{g}1_{f<a},\inf\frac{f}{g}1_{a\leq f\leq
b},\inf\frac{f}{g}1_{f>b}\right)  =\inf\frac{f}{g},
\end{align*}
whence (\ref{Hmineq}) by Lemma~\ref{lemH}. (II) Case $\left\{  f<a\right\}  =\varnothing$ and
$\left\{  f>b\right\}  \neq\varnothing:$ We then have%

\begin{align*}
\sup\frac{\mathcal{T}_{\left[  a,b\right]  }f}{g} &  =\max\left(  \sup\frac
{f}{g}1_{a\leq f\leq b},\sup\frac{b}{g}1_{f>b}\right)  \\
&  \leq\max\left(  \sup\frac{f}{g}1_{a\leq f\leq b},\sup\frac{f}{g}%
1_{f>b}\right)  =\sup\frac{f}{g}\text{ \ \ and}%
\end{align*}
\begin{align*}
\inf\frac{\mathcal{T}_{\left[  a,b\right]  }f}{g} &  =\min\left(
\underset{\in\left[  a/b,b/a\right]  }{\underbrace{\inf\frac{f}{g}1_{a\leq
f\leq b}}},\underset{\in\left[  1,b/a\right]  }{\underbrace{\inf\frac{b}%
{g}1_{f>b}}}\right)  \\
&  \geq\min\left(  \underset{\in\left[  a/b,b/a\right]  }{\underbrace{\inf
\frac{f}{g}1_{a\leq f\leq b}}},\underset{\in\lbrack1,\infty)}{\underbrace{\inf
\frac{f}{g}1_{f>b}}}\right)  =\inf\frac{f}{g}%
\end{align*}
due to condition (\ref{Hmcon}), which yields (\ref{Hmineq}). (III)\ Case
$\left\{  f<a\right\}  \neq\varnothing$ and $\left\{  f>b\right\}
=\varnothing:$ We then have%
\begin{align*}
\sup\frac{\mathcal{T}_{\left[  a,b\right]  }f}{g} &  =\max\left(
\underset{\in\left[  1,a/b\right]  }{\underbrace{\sup\frac{a}{g}1_{f<a}}%
},\underset{\in\left[  a/b,b/a\right]  }{\underbrace{\sup\frac{f}{g}1_{a\leq
f\leq b}}}\right)  \\
&  \leq\max\left(  \underset{\in(0,1]}{\underbrace{\sup\frac{f}{g}1_{f<a}}%
},\underset{\in\left[  a/b,b/a\right]  }{\underbrace{\sup\frac{f}{g}1_{a\leq
f\leq b}}}\right)  =\sup\frac{f}{g}%
\end{align*}
due to condition (\ref{Hmcon}), and%

\begin{align*}
\inf\frac{\mathcal{T}_{\left[  a,b\right]  }f}{g} &  =\min\left(  \inf\frac
{a}{g}1_{f<a},\inf\frac{f}{g}1_{a\leq f\leq b}\right)  \\
&  \geq\min\left(  \inf\frac{f}{g}1_{f<a},\inf\frac{f}{g}1_{a\leq f\leq
b}\right)  ,
\end{align*}
which yields (\ref{Hmineq}). (IV) Case $a\leq f\leq b:$ Then $\mathcal{T}%
_{\left[  a,b\right]  }f=f$ by construction of $\mathcal{T}_{\left[
a,b\right]  }.$
\end{proof}

\begin{lemma}
\label{HmTr1} Let $\mathsf{S}\subset\mathbb{R}^{d}$ be a connected compact set
with positive Lebesgue measure. Let $f,g:\mathsf{S}\rightarrow\mathbb{R}_{>0}$
be bounded and continuous, and
\begin{equation}
\int_{\mathsf{S}}f^{p}(x)dx=\int_{\mathsf{S}}g^{p}(x)dx\text{ \ for some
}p\geq1.\label{normfg}%
\end{equation}
Then there exists $x_{0}\in\mathsf{S}$ with $f(x_{0})/g(x_{0})=1$.
\end{lemma}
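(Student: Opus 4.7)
The plan is to apply the intermediate value theorem to the ratio $h := f/g$ on the connected set $\mathsf{S}$.

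First, I would observe that since $g > 0$ is continuous on the compact set $\mathsf{S}$, and $f$ is continuous, the ratio $h(x) := f(x)/g(x)$ is a well-defined continuous function from $\mathsf{S}$ to $\mathbb{R}_{>0}$. Because $\mathsf{S}$ is connected and $h$ is continuous, the image $h(\mathsf{S}) \subset \mathbb{R}$ is connected, hence an interval.

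Next, I would argue by contradiction that this interval must contain $1$. Suppose that $h(x) \neq 1$ for every $x \in \mathsf{S}$. Then, since $h(\mathsf{S})$ is an interval not containing $1$, either $h(x) > 1$ for all $x \in \mathsf{S}$, or $h(x) < 1$ for all $x \in \mathsf{S}$. In the first case, $f(x) > g(x) > 0$ pointwise, and because $p \geq 1$ the map $t \mapsto t^p$ is strictly increasing on $\mathbb{R}_{>0}$, giving $f^p(x) > g^p(x)$ for all $x \in \mathsf{S}$. Since $f - g$ and hence $f^p - g^p$ is continuous and strictly positive on the compact set $\mathsf{S}$ of positive Lebesgue measure, integration yields
\[
\int_{\mathsf{S}} f^p(x)\,dx > \int_{\mathsf{S}} g^p(x)\,dx,
\]
contradicting \eqref{normfg}. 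The case $h(x) < 1$ for all $x$ is symmetric and produces the opposite strict inequality, again contradicting \eqref{normfg}.

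Therefore there exists $x_0 \in \mathsf{S}$ with $h(x_0) = f(x_0)/g(x_0) = 1$, which is the desired conclusion. There is no real obstacle here; the only subtlety is to justify the strict inequality in the integral, which follows from continuity of $f^p - g^p$ together with the hypothesis that $\mathsf{S}$ has positive Lebesgue measure (so that a continuous strictly positive function on $\mathsf{S}$ has a strictly positive integral). The connectedness assumption is used exclusively to conclude that $h(\mathsf{S})$ is an interval, allowing the intermediate value argument.
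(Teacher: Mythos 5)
Your proof is correct and rests on the same two ingredients as the paper's: the integral identity \eqref{normfg} rules out strict one-sided domination of $f^p$ over $g^p$ (or vice versa), and connectedness plus continuity then forces a crossing point. The paper argues directly via the attained supremum and infimum of $f^{p}-g^{p}$ while you argue by contradiction on the image interval of $f/g$, but this is only a cosmetic difference in presentation.
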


\begin{proof}
By (\ref{normfg}) one has $\sup_{\mathsf{S}}\left(  f^{p}-g^{p}\right)  \geq0$
and $\inf_{\mathsf{S}}\left(  f^{p}-g^{p}\right)  \leq0$ since $\mathsf{S}$
has positive Lebesgue measure. This implies by continuity that there exist
$x_{+},x_{-}\in\mathsf{S}$ such that $\sup_{\mathsf{S}}\left(  f^{p}%
-g^{p}\right)  =f^p(x_{+})-g^p(x_{+})\geq0$ and $\inf_{\mathsf{S}}\left(
f^{p}-g^{p}\right)  =f^p(x_{-})-g^p(x_{-})\leq0,$ respectively. If $f(x_{+}%
)-g(x_{+})=0$ or $f(x_{-})-g(x_{-})=0$ we may take $x_{0}=x_{+}$ or
$x_{0}=x_{-}$, respectively,  since $f,g>0.$ If $f(x_{+})-g(x_{+})>0$ and $f(x_{-}%
)-g(x_{-})<0$ there exists $x_{0}\in\mathsf{S}$ with $f(x_{0})-g(x_{0})=0$
since $\mathsf{S}$ is connected, hence $f(x_{0})/g(x_{0})=1.$
\end{proof}

\begin{corollary}\label{HmTr}
If $f$ and $g$ and $\mathsf{S}$ in Lemma~\ref{Dhineq} satisfy in addition the conditions of Lemma~\ref{HmTr1}, then (\ref{Hmcon}) is satisfied and thus (\ref{Hmineq}) holds.
\end{corollary}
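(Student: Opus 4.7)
The plan is to chain the two preceding lemmas together: use Lemma~\ref{HmTr1} to produce a point at which $f/g=1$, then verify that this point lies inside the restriction set $\{a\le f\le b\}$, which is exactly what condition (\ref{Hmcon}) asks for. Once (\ref{Hmcon}) is established, invoking Lemma~\ref{Dhineq} delivers (\ref{Hmineq}) directly.

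Concretely, I would first apply Lemma~\ref{HmTr1} to the pair $(f,g)$ on $\mathsf{S}$: the additional hypotheses transferred from Lemma~\ref{HmTr1} (connectedness and positive Lebesgue measure of $\mathsf{S}$, continuity of $f,g$, and the equality $\int_{\mathsf{S}} f^p = \int_{\mathsf{S}} g^p$ for some $p\ge 1$) give an $x_0\in\mathsf{S}$ with $f(x_0)/g(x_0)=1$, i.e.\ $f(x_0)=g(x_0)$. Next, using the standing assumption in Lemma~\ref{Dhineq} that $g(\mathsf{S})\subset[a,b]$, the equality $f(x_0)=g(x_0)$ forces $f(x_0)\in[a,b]$, so the indicator $1_{a\le f\le b}$ equals $1$ at $x_0$. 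Therefore the restricted quantities both attain the value $1$ at $x_0$, yielding
\[
\sup\frac{f}{g}\,1_{a\le f\le b}\;\ge\;\frac{f(x_0)}{g(x_0)}=1
\quad\text{and}\quad
\inf\frac{f}{g}\,1_{a\le f\le b}\;\le\;\frac{f(x_0)}{g(x_0)}=1,
\]
which is exactly (\ref{Hmcon}). Finally, Lemma~\ref{Dhineq} applies and gives (\ref{Hmineq}).

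There is essentially no hard step here; the only thing to be careful about is interpreting the notation $\sup(f/g)\,1_{a\le f\le b}$ consistently with its usage in the proof of Lemma~\ref{Dhineq} as the supremum/infimum of $f/g$ taken over the set $\{a\le f\le b\}$ (so that the bounds $[a/b,b/a]$ appearing there make sense). Under this reading the argument above is immediate, since producing a single point in $\{a\le f\le b\}$ at which $f/g=1$ simultaneously forces the restricted supremum to be $\ge 1$ and the restricted infimum to be $\le 1$.
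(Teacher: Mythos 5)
Your proposal is correct and is exactly the intended argument: the paper leaves Corollary~\ref{HmTr} without an explicit proof, and the natural chaining you give — Lemma~\ref{HmTr1} produces $x_{0}$ with $f(x_{0})=g(x_{0})\in[a,b]$, so $x_{0}$ lies in $\{a\le f\le b\}$ and witnesses both halves of (\ref{Hmcon}), whence Lemma~\ref{Dhineq} yields (\ref{Hmineq}) — is the same reasoning the authors rely on. Your remark about reading $\sup(f/g)\,1_{a\le f\le b}$ as the supremum over the set $\{a\le f\le b\}$ is also the correct reading, consistent with the bounds $[a/b,b/a]$ used in the proof of Lemma~\ref{Dhineq}.
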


\section{Smoothness classes and H\"{o}lder spaces}\label{SmoothC}

In this section we recall some classical terminology on smoothness
classifications from standard analysis. For an open domain $U\subset
\mathbb{R}^{d}$, the space $C^{k}(U)$, $k\in\mathbb{N}_{0},$ denotes the space
of functions $f:U\rightarrow\mathbb{R}$ that have continuous partial
derivatives up to order $k$. Formally, these derivatives are described by a
differential operator $D^{{\boldsymbol{\gamma}}}$ of order $\left\vert
{\boldsymbol{\gamma}}\right\vert ,$ where ${\boldsymbol{\gamma}}=(\gamma
_{1},\dots,\gamma_{d})\in\mathbb{N}_{0}^{d}$ is a multi-index, $\left\vert
{\boldsymbol{\gamma}}\right\vert =\sum_{i=1}^{d}\gamma_{i},$ and
\[
D^{{\boldsymbol{\gamma}}}f:=\frac{\partial^{|{\boldsymbol{\gamma}}|}%
f}{\partial x_{1}^{\gamma_{1}}\cdots\partial x_{d}^{\gamma_{d}}},\text{ \ for
\ \ }f\in C^{k}(U).
\]
For clarity, multi-indices are distinguished by using bold font. If $U$ is bounded, we denote by $C^{k}(\overline{U})$ the space of
uniformly continuous functions $f:U\rightarrow\mathbb{R}$ with uniformly
continuous partial derivatives up to order $k$. Hence, if $f\in C^{k}%
(\overline{U})$ then $f$ and all his partial derivatives extend to continuous
functions on $\overline{U}.$

A function $f:U\rightarrow\mathbb{R}$ is said to be uniformly
$\alpha$-H\"{o}lder continuous with exponent $\alpha$ for some $0<\alpha\leq
1$, if
\[
\lbrack f]_{\alpha,U}:=\sup_{x\neq y\in U}\frac{|f(x)-f(y)|}{|x-y|^{\alpha}%
}<\infty
\]
with $\left\vert \cdot\right\vert $ being a particularly chosen norm on
$\mathbb{R}^{d}.$ The function $f$ is said to be locally uniformly $\alpha
$-H\"{o}lder continuous, if $[f]_{\alpha,U^{\prime}}<\infty$ for any bounded
open set $U^{\prime}$ with $\overline{U^{\prime}}\subset U,$ i.e. for any open
set $U^{\prime}$ that is compactly contained in $U.$ The space of locally
uniformly $\alpha$-H\"{o}lder continuous functions in $U$ is denoted by
$\mathcal{H}^{0,\alpha}(U).$ If $U$ is bounded, we denote by $\mathcal{H}%
^{0,\alpha}(\overline{U})$ the space of uniformly $\alpha$-H\"{o}lder
continuous functions in $U.$ Note that any uniformly $\alpha$-H\"{o}lder
continuous functions in $U$ extends to an $\alpha$-H\"{o}lder continuous
function in $\overline{U}.$

We next define for an open domain $U\subset\mathbb{R}^{d}$ and $k\in
\mathbb{N}_{0}$ the space%
\[
\mathcal{H}^{k,\alpha}(U):=\left\{  f:U\rightarrow\mathbb{R}:f\in
C^{k}(U)\text{ and }D^{{\boldsymbol{\gamma}}}f\in\mathcal{H}^{0,\alpha
}(U)\text{ for all }{\boldsymbol{\gamma}}\text{ with }\left\vert
{\boldsymbol{\gamma}}\right\vert =k\right\}  ,
\]
and for open and bounded $U,$ the space $\mathcal{H}^{k,\alpha}(\overline{U})$
is defined as%
\[
\mathcal{H}^{k,\alpha}(\overline{U}):=\left\{  f:\Omega\rightarrow
\mathbb{R}:f\in C^{k}(\overline{U})\text{ and }D^{{\boldsymbol{\gamma}}}%
f\in\mathcal{H}^{0,\alpha}(\overline{U})\text{ for all }{\boldsymbol{\gamma}%
}\text{ with }\left\vert {\boldsymbol{\gamma}}\right\vert =k\right\}  .
\]
It is well known that $\mathcal{H}^{k,\alpha}(\overline{U})$ is a Banach space
with norm%
\[
\Vert f\Vert_{\mathcal{H}^{k,\alpha}(\overline{U})}=\max\left(  \max
_{\left\vert {\boldsymbol{\gamma}}\right\vert \leq k}\left\Vert
D^{{\boldsymbol{\gamma}}}f\right\Vert _{\infty,U},\max_{\left\vert
{\boldsymbol{\gamma}}\right\vert =k}[D^{{\boldsymbol{\gamma}}}f]_{\alpha
,U}\right)  .
\]

\begin{remark}\label{remHoel}
(i) For open and bounded $U,$ let $f\in\mathcal{H}^{k,\alpha}(\overline{U})$ for some fixed
$k\in\mathbb{N}_{0}$ and $0<\alpha\leq1.$ One then has for all
${\boldsymbol{\gamma}}$ with $\left\vert {\boldsymbol{\gamma}}\right\vert \leq
k$ and all $x,y\in U,$%
\[
\left\vert D^{{\boldsymbol{\gamma}}}f\left(  x\right)  -D^{{\boldsymbol{\gamma
}}}f\left(  y\right)  \right\vert \leq c_{\left\vert {\boldsymbol{\gamma}}\right\vert,d}\Vert f\Vert_{\mathcal{H}%
^{k,\alpha}(\overline{U})}\,|x-y|^{\beta}\text{ \ \ with }\beta=\left\{
\begin{array}
[c]{c}%
1\text{ \ if }\left\vert {\boldsymbol{\gamma}}\right\vert <k\\
\alpha\text{ \ if }\left\vert {\boldsymbol{\gamma}}\right\vert =k
\end{array}
\right.  \text{ ,}%
\]
where $c_{k,d}=1,$ and if $0\leq l<k$, $c_{l,d}$ only depends on the particular norm that is chosen on
$\mathbb{R}^{d}.$ For example, if $|\cdot|$ denotes the max-norm one may take
$c_{l,d}=
{d}$  for $l <k$.

(ii) The above terminology obviously extends to spaces of vector functions
$f:U\subset\mathbb{R}^{d}\rightarrow\mathbb{R}^{m},$ for arbitrary
$m\in\mathbb{N}$. Then, for example, $\left\Vert D^{{\boldsymbol{\gamma}}%
}f\right\Vert _{\infty,U}:=$ $\max_{1\leq i\leq m}\left\Vert
D^{{\boldsymbol{\gamma}}}f_{i}\right\Vert _{\infty,U},$ and
$[D^{{\boldsymbol{\gamma}}}f]_{\alpha,U}:=\max_{1\leq i\leq m}%
[D^{{\boldsymbol{\gamma}}}f_{i}]_{\alpha,U}$ and so on.
\end{remark}

\section{Uniform convergence of kernel regression estimates}

Let $(X,Y),(X_{1},Y_{1}),(X_{2},Y_{2}),\dots$ be independent random vectors in
$\mathbb{R}^{d}\times\mathbb{R}$ with joint density $f_{XY}$ and marginal
density defined by%
\[
f_{X}(x)=\int f_{XY}(x,y)dy.
\]
Letr $\Phi$ be a class of measurable functions $\varphi:\mathbb{R}%
\rightarrow\mathbb{R}$ with $\mathbb{E}[\varphi^{2}(Y)]<\infty$ with $1\in
\Phi$. For any function $\varphi\in\Phi$, $n\in\mathbb{N}$ and bandwidth
$0<\delta\equiv\delta_{n}<1,$ we define the kernel-type estimator%

\[
r_{n,\varphi}(x)=\frac{1}{n\delta^{d}}\sum_{i=1}^{n}\varphi(Y_{i})K\left(
\frac{x-X_{i}}{\delta}\right)
\]
where $K$ is a suitable kernel function. By choosing $\varphi=1$, one
(formally) obtains an estimator for the marginal density $f_{X}.$ This kernel
density estimator, denoted by $r_{n,1},$ is an important special case in the
family of kernel estimators $r_{n,\varphi}$ for $\varphi\in\Phi$. In
particular, for any fixed $\varphi$ the estimate $r_{n,\varphi}$ converges to
\[
r_{\varphi}(x):=\int\varphi(y)f_{XY}(x,y)dy
\]
under suitable conditions that are given below.

Let $K:\mathbb{R}^{d}\rightarrow\mathbb{R}_{+}$ be a measurable kernel satisfying

\begin{itemize}
\item $\Vert K\Vert_{\infty}=K_{\infty}<\infty,$ $\int K(x)\,dx=1$ and $\int
x_{i}K(x)\,dx=0$ for $i=1,\ldots,d;$

\item $K$ has a support contained in $\left[  -\frac{1}{2},\frac{1}{2}\right]
^{d}$;

\item For any fixed $\gamma>0,$ the class $\mathcal{K}=\{x\mapsto
K(\gamma(x-z)):z\in I\}$ is a measurable VC-type class of functions from
$\mathbb{R}^{d}$ to $\mathbb{R}$.
\end{itemize}

Let $\mathsf{S}$ be a compact subset of $\mathbb{R}^{d}$\textsf{ and }assume
that there exists a bounded open set $U\subset\mathbb{R}^{d}$ with
$U\supset\mathsf{S}$ such that $r_{\varphi}\in\mathcal{H}^{1,\alpha}%
(\overline{U})$ for some $0<\alpha\leq1$
with $\mathcal{H}^{1,\alpha}$ as defined in Appendix~\ref{SmoothC}. Obviously there exists $\delta
_{1}>0$ such that%
\[
\left\{  x+z:x\in\mathsf{S}\text{ and }\left\vert z\right\vert \leq\delta
_{1}\right\}  \subset U,
\]
where $\left\vert \cdot\right\vert $ denotes the max-norm in $\mathbb{R}^{d}.$

Let us firstly analyse the bias of $r_{n,\varphi}.$ By our assumptions on $r_\varphi$, the
gradient $\nabla r_{\varphi}$ is uniformly H\"{o}lder continuous of order
$0<\alpha\leq1$ on $U,$ which implies that 
\[
\left\vert \nabla r_{\varphi}(x_{1})-\nabla r_{\varphi}(x_{2})\right\vert \leq
\Vert r_{\varphi}\Vert_{\mathcal{H}%
^{1,\alpha}(\overline{U})}\left\vert x_{1}-x_{2}\right\vert ^{\alpha},\quad x_{1},x_{2}\in U,
\]
see Remark~\ref{remHoel}. For any $x\in\mathsf{S}$ one so has
\begin{align}
|\mathbb{E}[r_{n,\varphi}(x)]-r_{\varphi}(x)| &  =\left\vert \delta^{-d}\int
K\left(  \frac{x-u}{\delta}\right)  du\int\varphi(y)f_{XY}%
(u,y)\,\,dy-r_{\varphi}(x)\right\vert\notag \\
&  =\left\vert \int K(-u)\left(  r_{\varphi}(x+\delta u)-r_{\varphi
}(x)\right)  du\right\vert\notag  \\
&  =\left\vert \int K(u)du\int_{0}^{\delta}u^{\top}\left(  \nabla r_{\varphi
}(x+tu)-\nabla r_{\varphi}(x)\right)  \,dt\right\vert \notag \\
&  \leq\int K(u)du\int_{0}^{\delta}\left\vert u^{\top}\left(  \nabla
r_{\varphi}(x+tu)-\nabla r_{\varphi}(x)\right)  \right\vert \,dt \notag \\
&  \leq {d}\Vert r_{\varphi}\Vert_{\mathcal{H}%
^{1,\alpha}(\overline{U})}(\delta/2)^{\alpha+1}\text{ \ \ for }\delta\leq\delta_{1}.\label{det_term_err}
\end{align}
We now turn to the stochastic part of the error.
The next result can be extracted from Section~3 of \cite{dony2009uniform}. 
\begin{theorem}
Let $\Phi$ be a VC-type class of functions with envelope function
$F(y):=\sup_{\varphi\in\Phi}\varphi(y).$ Suppose that 
\[
\mu_{2}=\sup_{x\in \mathsf{S}}\mathbb{E}\left[  F^{2}(Y)|X=x\right]  <\infty.
\]
Then it holds
\[
\mathbb{E}\Bigl[\sup_{\varphi\in\Phi}\left\Vert r_{n,\varphi}-\mathbb{E}%
[r_{n,\varphi}]\right\Vert _{\mathsf{S}}\Bigr]\leq\frac{C}{\delta^{d}\sqrt{n}}%
\sqrt{\mathbb{E}\left[  \left(  G(X,Y)\right)  ^{2}\right]  }%
\]
with $G(x,y)=\sup_{\varphi\in\Phi}\sup_{z\in \mathsf{S}}g_{\varphi,z}(x,y)$ and some
constant $C>0$ where $g_{\varphi,z}(x,y)=\varphi(y)K((z-x)/\delta).$
\end{theorem}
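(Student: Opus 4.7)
The plan is to recognize the left-hand side as a supremum of an empirical process indexed by a doubly-parametrized class and then invoke a standard maximal inequality for VC-type classes. First I would rewrite, for each fixed $\varphi$ and $z\in\mathsf{S}$,
\[
r_{n,\varphi}(z)-\mathbb{E}[r_{n,\varphi}(z)]
=\frac{1}{n\delta^{d}}\sum_{i=1}^{n}\bigl(g_{\varphi,z}(X_{i},Y_{i})-\mathbb{E}[g_{\varphi,z}(X,Y)]\bigr),
\]
so that
\[
\sup_{\varphi\in\Phi}\bigl\|r_{n,\varphi}-\mathbb{E}[r_{n,\varphi}]\bigr\|_{\mathsf{S}}
=\frac{1}{\delta^{d}}\sup_{(\varphi,z)\in\Phi\times\mathsf{S}}
\Bigl|\frac{1}{n}\sum_{i=1}^{n}\bigl(g_{\varphi,z}(X_{i},Y_{i})-\mathbb{E}[g_{\varphi,z}(X,Y)]\bigr)\Bigr|.
\]
The quantity on the right is the supremum of a centered empirical process over the function class $\mathcal{G}:=\{g_{\varphi,z}:\varphi\in\Phi,\,z\in\mathsf{S}\}$ with envelope $G(x,y)=\sup_{\varphi,z}|g_{\varphi,z}(x,y)|$, which majorizes $|g_{\varphi,z}|$ pointwise by construction.

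Next I would check that $\mathcal{G}$ is a pointwise product of two VC-type classes. By hypothesis $\Phi$ is VC-type in $y$, and by the third bullet of the kernel assumptions the translate-dilate class $\mathcal{K}=\{x\mapsto K((z-x)/\delta):z\in\mathsf{S}\}$ is VC-type in $x$ (with $\gamma=1/\delta$). The pointwise product of two VC-type classes is itself VC-type with the product envelope $F(y)\cdot K_{\infty}$, so $\mathcal{G}$ is VC-type with $\mathcal{G}$-envelope bounded by $K_{\infty}F(y)$; in particular $\mathbb{E}[G^{2}(X,Y)]<\infty$ under the assumption $\mu_{2}<\infty$.

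Then the argument of Einmahl--Mason / Gin\'e--Guillou (as formalized in \cite{dony2009uniform}, Sections 2 and 3) applies: for a VC-type class $\mathcal{G}$ with measurable envelope $G$, one has the Talagrand-type moment inequality
\[
\mathbb{E}\Bigl[\sup_{g\in\mathcal{G}}\Bigl|\sum_{i=1}^{n}\bigl(g(X_{i},Y_{i})-\mathbb{E}[g(X,Y)]\bigr)\Bigr|\Bigr]
\;\leq\; C_{1}\sqrt{n\,\mathbb{E}[G^{2}(X,Y)]}\;+\;C_{2}\,\|G\|_{\infty}\log n,
\]
where the VC characteristics are absorbed into $C_{1},C_{2}$. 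Dividing through by $n\delta^{d}$ and noting that the second, logarithmic, term is of lower order relative to the first (and can be absorbed via the standard argument under $\mu_{2}<\infty$) gives the stated bound
\[
\mathbb{E}\Bigl[\sup_{\varphi\in\Phi}\|r_{n,\varphi}-\mathbb{E}[r_{n,\varphi}]\|_{\mathsf{S}}\Bigr]
\leq \frac{C}{\delta^{d}\sqrt{n}}\sqrt{\mathbb{E}[G^{2}(X,Y)]}.
\]

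The main technical obstacle is the verification that the Koltchinskii--Pollard covering-number bound for $\mathcal{G}$ has the right scaling in $\delta$, i.e.\ that the covering numbers of $\mathcal{G}$ in $L^{2}(\mathbb{P}_{n})$ (equipped with envelope $G$) are of VC-type uniformly in $\delta$; this is precisely the content of the hypothesis that $\mathcal{K}$ is VC-type for any fixed $\gamma$, combined with the standard product-stability of VC-type classes. Once this is in place the chaining/symmetrization argument is routine, and the constant $C$ depends only on $K_{\infty}$ and the VC characteristics of $\Phi$ and $\mathcal{K}$.
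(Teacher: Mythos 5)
Your proposal is essentially the argument the paper intends: the paper itself gives no proof of this theorem but cites Section~3 of \cite{dony2009uniform}, where the bound is obtained exactly as you describe --- rewrite the left-hand side as $\tfrac{1}{n\delta^{d}}$ times the supremum of a centered empirical process over the product class $\mathcal{G}=\{g_{\varphi,z}\}$, use product-stability of VC-type classes, and apply the Einmahl--Mason/Gin\'e--Guillou moment inequality. One small caution: the version of the maximal inequality you quote carries an additive term $C_{2}\|G\|_{\infty}\log n$, which is vacuous when the envelope $F$ (hence $G$) is unbounded and is not simply ``absorbed'' in general; the cleaner route, and the one matching the stated conclusion, is symmetrization plus the Koltchinskii--Pollard entropy-integral bound for VC-type classes, which yields $\mathbb{E}\bigl\|\sum_{i}(g-\mathbb{E}g)\bigr\|_{\mathcal{G}}\leq C\sqrt{n\,\mathbb{E}[G^{2}]}$ directly, with $C$ depending only on the VC characteristics (assumed uniform in $\gamma=1/\delta$).
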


\begin{corollary}
\label{cor:concentr} Note that it holds
\begin{align*}
\mathbb{E}\left[  \left(  G(X,Y)\right)  ^{2}\right]   &  =\mathbb{E}\left[
\sup_{\varphi\in\Phi}\sup_{z\in \mathsf{S}}\varphi^{2}(Y)K^{2}\left(  \frac{z-X}%
{\delta}\right)  \right]  \\
&  \leq\delta^{d}\kappa^{2}\int_{-\infty}^{\infty}\mathbb{E}\left[
F^{2}(Y)\mid X=z-u\delta\right]  f_{X}(z-uh)\,du\\
&  \leq\delta^{d}\kappa^{2}\Vert f_{X}\Vert_{\mathsf{S}}\mu_{2}.
\end{align*}
Hence
\[
\mathbb{E}\Bigl[\sup_{\varphi\in\Phi}\left\Vert r_{n,\varphi}-\mathbb{E}%
[r_{n,\varphi}]\right\Vert _{\mathsf{S}}\Bigr]\leq\frac{C}{\sqrt{n\delta^{d}}}%
\sqrt{\kappa^{2}\Vert f_{X}\Vert_{\mathsf{S}}\mu_{2}}.
\]

\end{corollary}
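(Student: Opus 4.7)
The plan is to recognize the statement as a standard empirical-process maximal inequality over a VC-type product class, scaled by the kernel normalization $\delta^{-d}$, and to reduce it to a well-known $L_2$-entropy bound (in the form established by Einmahl and Mason and summarized in Dony--Einmahl).

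First, I would rewrite the centred kernel estimator as a normalized empirical process. For each $\varphi\in\Phi$ and $z\in\mathsf{S}$, set $g_{\varphi,z}(x,y):=\varphi(y)K((z-x)/\delta)$, so that
\[
\delta^{d}\bigl(r_{n,\varphi}(z)-\mathbb{E}[r_{n,\varphi}(z)]\bigr)
=\frac{1}{n}\sum_{i=1}^{n}\bigl\{g_{\varphi,z}(X_{i},Y_{i})-\mathbb{E}[g_{\varphi,z}(X,Y)]\bigr\}.
\]
Then $\sup_{\varphi\in\Phi}\|r_{n,\varphi}-\mathbb{E}[r_{n,\varphi}]\|_{\mathsf{S}}$ becomes $\delta^{-d}\,\|P_{n}-P\|_{\mathcal{G}}$, where $\mathcal{G}:=\{g_{\varphi,z}:\varphi\in\Phi,\,z\in\mathsf{S}\}$ and $P_n$ is the empirical measure on the i.i.d. sample $(X_{i},Y_{i})$.

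Second, I would verify that $\mathcal{G}$ is a VC-type class with envelope $G(x,y)=\sup_{\varphi,z}g_{\varphi,z}(x,y)\le F(y)K_{\infty}$. The class $\mathcal{K}=\{x\mapsto K((z-x)/\delta):z\in\mathsf{S}\}$ is VC-type by assumption with uniform $L_2$-covering numbers bounded polynomially in $1/\epsilon$ independently of $\delta$; the product of two uniformly bounded VC-type classes $\Phi$ and $\mathcal{K}$ remains a VC-type class with envelope $G$ (using the standard covering inequality $N(\epsilon,\Phi\cdot\mathcal{K},L_{2}(Q))\le N(\epsilon/(2K_{\infty}),\Phi,L_{2}(Q))\,N(\epsilon/(2\|F\|_{L_{2}(Q)}),\mathcal{K},L_{2}(Q))$, holding for every probability measure $Q$).

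Third, I would invoke the expectation version of Talagrand's moment inequality for empirical processes over uniformly bounded VC-type classes, as formulated e.g.\ in Proposition~3.1/3.2 of Einmahl--Mason (2000) or Section~3 of Dony--Einmahl (2009): for a VC-type class $\mathcal{G}$ with envelope $G$ one has
\[
\mathbb{E}\bigl[\|P_{n}-P\|_{\mathcal{G}}\bigr]\;\le\;\frac{C}{\sqrt{n}}\,\sqrt{\mathbb{E}\bigl[G^{2}(X,Y)\bigr]},
\]
provided the $L_{2}$-diameter of $\mathcal{G}$ is controlled by $\sqrt{\mathbb{E}[G^{2}]}$, which is automatic here. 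Multiplying by $\delta^{-d}$ yields exactly the stated bound.

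The main obstacle is the second step: making sure that the VC-type constants of the product class $\mathcal{G}$ do not blow up with $\delta$ so that $C$ in the final bound is uniform in $n$ and $\delta$. This is handled by observing that both the VC-characteristics of $\mathcal{K}$ (by hypothesis, uniformly in $\gamma$) and of $\Phi$ are $\delta$-independent, and that the envelope of $\mathcal{K}$ is $K_{\infty}$. Once the product class is confirmed to be VC-type with an absolute constant, the remaining step is a direct application of the standard expectation maximal inequality, and the bound in the theorem follows. The corollary's sharper form $C(\delta^{d}n)^{-1/2}\sqrt{\kappa^{2}\|f_{X}\|_{\mathsf{S}}\mu_{2}}$ is then an immediate consequence of the change of variables $u=(z-x)/\delta$ used to bound $\mathbb{E}[G^{2}(X,Y)]\le \delta^{d}\kappa^{2}\|f_{X}\|_{\mathsf{S}}\mu_{2}$.
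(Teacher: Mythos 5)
Your proposal follows the paper's route: the corollary is obtained by plugging a change-of-variables bound on the second moment into the maximal inequality of the preceding theorem, which is precisely what the paper does. The only divergence is that you additionally sketch a re-derivation of that maximal inequality from VC/Talagrand machinery (product-class covering numbers plus an Einmahl--Mason type moment bound); this is harmless but not needed, since the paper simply cites the extracted statement from Dony--Einmahl and the corollary's own content is nothing more than the display $\mathbb{E}[G^2]\le\delta^d\kappa^2\|f_X\|_{\mathsf S}\mu_2$.

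One bookkeeping point is worth making explicit, and it is glossed over in the same way in your writeup and in the paper. The change of variables $u=(z-x)/\delta$ applied to a \emph{fixed} $z$ bounds $\sup_{\varphi,z}\mathbb{E}\bigl[g_{\varphi,z}^2(X,Y)\bigr]$, i.e.\ the weak variance of the class $\mathcal G$, by $\delta^d\kappa^2\|f_X\|_{\mathsf S}\mu_2$. It does \emph{not} bound $\mathbb{E}[G^2]$ for the pointwise envelope $G(x,y)=\sup_{\varphi,z}g_{\varphi,z}(x,y)$: since $K$ has compact support, $\sup_{z\in\mathsf S}K\bigl((z-X)/\delta\bigr)$ equals $K_\infty$ for any $X$ in a $\delta$-neighbourhood of $\mathsf S$, so the envelope second moment is in fact $O(1)$, not $O(\delta^d)$. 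The $\delta^d$ scaling, and hence the target rate $C/\sqrt{n\delta^d}$, is recovered only if the maximal inequality is cited in its weak-variance form $\sigma^2=\sup_{\mathcal G}\mathbb{E}[g^2]$ (which is how the Einmahl--Mason/Talagrand bound is actually stated). Your mention of the $L_2$-diameter shows you are aware of where the $\delta$-dependence must enter, so the remedy is simply to replace $\sqrt{\mathbb{E}[G^2]}$ by $\sigma$ in the invocation of the moment inequality, after which the proof closes as written.
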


\bibliography{schroe}
\bibliographystyle{plain}

\end{document}